\documentclass[nohyperref]{article}

\usepackage{microtype}
\usepackage{graphicx}
\usepackage{subfigure}
\usepackage{booktabs} %

\usepackage{hyperref}

\usepackage{caption}
\captionsetup{skip=0pt}
\usepackage[accepted]{icml2022}

\usepackage{amsmath}
\usepackage{amssymb}
\usepackage{mathtools}
\usepackage{amsthm}

\usepackage[capitalize,noabbrev]{cleveref}

\usepackage{amsfonts}
\usepackage[ruled,algo2e]{algorithm2e}
\usepackage{svg}
\usepackage{enumerate}
\usepackage{paralist}
\usepackage{multirow}
\usepackage{colortbl}
\usepackage{dsfont}

\usepackage{tikz}
\usepackage{tkz-graph}
\usetikzlibrary{shapes.geometric}
\usetikzlibrary{backgrounds}
\usetikzlibrary{arrows.meta}
\usepackage{placeins}
\usepackage{xcolor}
\definecolor{mygray}{gray}{0.9}

\usepackage{amsmath, mathtools}
\usepackage{centernot}
\usepackage{amssymb}
\usepackage{graphicx,subfigure}
\usepackage{amsthm}
\usepackage{bbm}
\usepackage{tabu}
\usepackage{paralist}
\usepackage{enumitem}
\usepackage{physics}
\usepackage{accents}
\usepackage[english]{babel}
\usepackage[utf8x]{inputenc}
\usepackage{amsfonts}
\usepackage{booktabs}
\usepackage{tabu}
\usepackage[T1]{fontenc}
\usepackage{mathrsfs}
\usepackage{multirow}
\usepackage{stmaryrd}
\usepackage{tikz}
\usetikzlibrary{backgrounds}
\pgfdeclarelayer{bg1}
\pgfdeclarelayer{bg2}
\pgfsetlayers{bg2, bg1, main}
\usepackage{xcolor}
\usetikzlibrary{arrows.meta}
\usepackage{pifont}%
\newcommand{\cmark}{\ding{51}}%
\newcommand{\xmark}{\ding{55}}%

\newcommand\Item[1][]{%
  \ifx\relax#1\relax  \item \else \item[#1] \fi
  \abovedisplayskip=0pt\abovedisplayshortskip=0pt~\vspace*{-\baselineskip}}

\newcommand{\inv}{\operatorname{inv}}
\newcommand{\supp}{\operatorname{supp}}

\newcommand{\HSIC}{\operatorname{HSIC}}

\newcommand{\ci}{\mathrel{\perp\mspace{-10mu}\perp}}

\newcommand{\nci}{\centernot{\ci}\hspace{-2pt}}
\DeclareMathOperator{\EX}{\mathbb{E}}%
\DeclareMathOperator{\E}{\mathbb{E}}%

\DeclareMathOperator*{\argmin}{arg\,min}
\newcommand{\Cov}{\mathrm{Cov}}

\newcommand{\simiid}{\overset{\text{i.i.d.}}{\sim}}

\newcommand{\independent}{\mbox{${}\perp\mkern-11mu\perp{}$}}

\renewcommand{\P}{\mathbb{P}}

\newcommand{\R}{\mathbb{R}}

\SetKwFor{RepTimes}{repeat}{times}{end}
\DontPrintSemicolon

\theoremstyle{plain}
\newtheorem{theorem}{Theorem}[section]
\newtheorem{proposition}[theorem]{Proposition}

\theoremstyle{definition}

\theoremstyle{remark}

\newcounter{subassumption}[asu]

\makeatletter
\renewcommand{\p@subassumption}{\theasu}%
\makeatother

\usepackage[textsize=tiny]{todonotes}

\icmltitlerunning{Exploiting Independent Instruments}

\begin{document}

\twocolumn[
\icmltitle{Exploiting Independent Instruments: Identification and Distribution Generalization}

\icmlsetsymbol{equal}{*}

\begin{icmlauthorlist}
\icmlauthor{Sorawit Saengkyongam}{KU}
\icmlauthor{Leonard Henckel}{KU}
\icmlauthor{Niklas Pfister}{KU}
\icmlauthor{Jonas Peters}{KU}

\end{icmlauthorlist}

\icmlaffiliation{KU}{Department of Mathematical Sciences, University of Copenhagen, Denmark}

\icmlcorrespondingauthor{Sorawit Saengkyongam}{ss@math.ku.dk}

\icmlkeywords{Causality, Distribution Generalization, Instrumental Variables, Independence, Causal Discovery}

\vskip 0.3in
]

\printAffiliationsAndNotice{}  %

\begin{abstract}
    Instrumental variable models allow us to identify a causal function between covariates $X$ and a response $Y$, even in the presence of unobserved confounding. Most of the existing estimators assume that the error term in the response $Y$ and the hidden confounders are
    uncorrelated with the instruments $Z$. This is often motivated by a graphical separation, an argument that also justifies independence.
    Positing an independence restriction, however, leads to strictly stronger identifiability results. We connect to the existing literature in econometrics and provide a practical method called HSIC-X for exploiting independence that can be combined with any gradient-based learning procedure. We see that even in identifiable settings, taking into account higher moments may yield better finite sample results. 
    Furthermore, we exploit the independence for distribution generalization.
    We prove that the proposed estimator is invariant to distributional shifts on the instruments and worst-case optimal whenever these shifts are sufficiently strong. These results hold even in the under-identified case where the instruments are not sufficiently rich to identify the causal function.
\end{abstract}

\section{Introduction} \label{sec:intro}
When estimating the causal function between a vector of covariates $X$ and a response $Y$ in the presence of unobserved confounding,   standard regression procedures
such as ordinary least squares (OLS)
are even asymptotically biased.
Instrumental variable approaches 
\citep{Wright1928, angrist1995identification, Newey2013}
exploit the existence of exogenous heterogeneity in the form of an instrumental variable (IV) $Z$ and estimate, under suitable conditions, the causal function consistently. 
Importantly, 
the errors in $Y$ and the hidden confounders $U$ should be uncorrelated with the instruments $Z$.
Usually, this has to be argued for with background knowledge. Often this is done by assuming that the data generating process follows a structural causal model (SCM) \citep{Pearl2009, Bongers2021} (so that the distribution is Markov with respect to the induced graph), and that $Y$ and $U$ are $d$-separated from $Z$ in the graph obtained by removing the edge from $X$ to $Y$ (this is the case for the DAG at the beginning of Section~\ref{sec:identifiability}, for example). In particular, this requires an argument that $Z$ is not causing $Y$ directly but only via $X$. 
But this argument does not only imply that the errors in $Y$ and $U$ are uncorrelated but also that they are independent. This independence
comes with several benefits.

For example, 
even in settings where the causal function can be identified by classical approaches based on uncorrelatedness, 
it has been observed that
the independence can be exploited to construct estimators
that achieve the semiparametric efficiency bound, at least when 
the error distribution comes from a known, parametric family \citep{Hansen2010}.
Furthermore, the independence restriction is stronger
than uncorrelatedness and therefore yields stronger 
identifiability results, 
which has been reported in the field of econometrics \citep[e.g.,][]{Imbens2009, Chesher2003}. 
For example, even binary instruments may be able to identify nonlinear effects \citep{Dunker2014, Torgovitsky2015, Loh2019}.

In this work, we investigate the independence restriction in more detail: we add to existing identifiability results, provide methods for exploiting the restriction for finite data and analyse implications for distribution generalisation.

More precisely, in
Section~\ref{sec:identifiability} we
discuss the identifiability conditions in a general and simple form, list some of the existing identifiability results that they imply, add novel results to this list, and extend the framework to conditional IV.

We also provide a practical method that exploits the above independence for estimating causal effects from data.
It relies on the Hilbert-Schmidt independence criterion (HSIC) \citep{Gretton2008} which has become a widely used tool for testing independence in a joint distribution. Equipped with a characteristic kernel (such as the Gaussian kernel),
HSIC is positive and equals zero if and only if the considered distribution factorizes.
We propose an easy-to-use method, called HSIC-X (`X' for `exogenous'), %
that minimizes HSIC directly.
The underlying problem is non-convex 
but can be tackled using widely used methods from stochastic optimization.
While theoretical guarantees are hard to obtain, it has recently been shown empirically that 
reliable optimization of HSIC seems possible, at least when considering the different problem of minimizing independence of residuals and predictors in a regression problem \citep{greenfeld20a, Mooij2009icml}.
Furthermore, the optimization can be initialized at informative starting points, such as the two-stage least squares (2SLS) solution, or even the OLS solution. 
HSIC-X can be combined with any machine learning method
for nonparametric regression that is optimized by (stochastic) gradient descent. 
The details of our method are described in Section~\ref{sec:algorithm}.

Furthermore,  
the independence restriction 
can be exploited for 
distribution generalization. In Section~\ref{sec:generalization},
we construct an estimator HSIC-X-pen (`pen' for `penalization')
that is 
worst-case optimal in nonlinear settings under distributional shifts
corresponding to interventions on $Z$.
This is particularly interesting in underidentified settings, where the causal function cannot be identified from the data. 
Our work thereby adds to an increasing literature connecting 
distributional robustness and causal inference \citep[e.g.,][]{Schoelkopf2012icml, Rojas2016, Magliacane2018, Rothenhaeusler2018, Arjovsky2019, Christiansen2020DG, Pfister2019stab, Yuan2021, pmlrkrueger21a, pmlrcreager21a}.
As for HSIC-X, the estimator HSIC-X-pen is modular in that it can be used
with any machine learning method for nonparametric regression.

Experiments on both simulated and real world data in
Section~\ref{sec:exp}
confirm that HSIC-X can exploit the improved identifiability guarantees and
can be more efficient in finite samples if wrong solutions yield  
both second and higher order dependencies between the residuals and the instruments.
The code for all the experiments is available at \url{https://github.com/sorawitj/HSIC-X}. %
All proofs are provided in Appendix~\ref{sec:proofs}.

\subsection{Further Related Work}
Independence between $Z$ and 
$Y - f(X)$ 
can also be characterized differently; e.g., it
is equivalent to 
\begin{equation*} %
    \E [\eta(Z) \psi(Y - f(X))] = \E [\eta(Z)] \E[\psi(Y - f(X))]
\end{equation*}
for all $\eta$ and $\psi$ in a sufficiently rich class (such as bounded continuous functions).
This suggests 
estimating the causal function by
a generalized methods of moments (GMM) 
approach. 
Indeed, \citet{Poirier2017} 
focuses on the derivation of  optimal weights for such estimating equations and the asymptotic behaviour of the estimator. 
\citet{Dunker2014,Dunker2021} phrase the problem as an inverse problem and derive convergence rates for the  iteratively regularized Gauss-Newton method.
The above works focus on theoretical advancements (e.g., the experiments are restricted to univariate settings). 
Considering the independence of residuals has also been suggested, by \citet[][Section~5]{Peters2016jrssb} but no practical method was provided.
Powerful regression techniques and machine learning methods to measure dependence in IV settings exist \citep{Hartford2017, Singh2019, Bennett2019, Muandet2020} but to the best of our knowledge, none of these methods exploit the independence restriction.

In summary, exploiting independence  does not seem to have played a major role in practice. Arguably, one of the reasons is that independence restrictions are difficult to work with in theory and practice. E.g., choosing the class of functions $\psi$ 
is non-trivial. %

\section{Identifiability from Higher Order Moments}
\label{sec:identifiability}
Unless stated otherwise, we consider the following SCM\\
\begin{minipage}{0.49\linewidth}
  \begin{align*}
    Z&\coloneqq \epsilon_Z\\
    U&\coloneqq \epsilon_U\\
    X&\coloneqq g(Z, U, \epsilon_X)\\
    Y&\coloneqq f(X)  + h(U, \epsilon_Y)
  \end{align*}
  \hspace{0.5em}
\end{minipage}%
\begin{minipage}{0.49\linewidth}
  \resizebox{0.95\textwidth}{!}{
    \begin{tikzpicture}[scale=1.4]
      \tikzstyle{VertexStyle} = [shape = circle, minimum width = 3em,
      fill=lightgray] \Vertex[Math,L=Y,x=1,y=0]{Y}
      \Vertex[Math,L=X,x=-1,y=0]{X}
      \Vertex[Math,L=Z,x=-2,y=1]{Z}
      \tikzset{EdgeStyle/.append style = {-Latex, line width=1}}
      \Edge[label=$f$](X)(Y)
      \Edge[label=$g$](Z)(X)
      \tikzstyle{EdgeStyle}=[bend left=50, line width=1, Latex-Latex, dashed]
      \Edge[label={\Large $U$}](X)(Y)
    \end{tikzpicture}}
\end{minipage}
where $(\epsilon_Z, \epsilon_U, \epsilon_X, \epsilon_Y)\sim Q$ are jointly independent noise variables, $Z\in\mathbb{R}^r$ 
are \emph{instruments}, $U\in\mathbb{R}^q$ are unobserved variables, $X\in\mathbb{R}^d$ are \emph{predictors} and $Y\in\mathbb{R}$ is a \emph{response}. For simplicity, we additionally assume that $\E[\epsilon_Z]=\E[X]=\E[h(U, \epsilon_Y)]=0$. We call 
a collection $M=(f, g, h, Q)$ an \emph{IV model},
with $f \in \mathcal{F} \subseteq \{f_{\diamond}\,|\,f_{\diamond}:\mathbb{R}^d\rightarrow\mathbb{R}\}$ 
and
$\mathcal{F}$  a pre-specified function class. 
The collection of all IV models of this form is denoted by $\mathcal{M}$. Any IV model $M\in\mathcal{M}$ induces a distribution $\P_M$ over the observed variables $(X, Y, Z)$.
We denote the data generating IV model by $M^0=(f^0, g^0, h^0, Q^0)$ (parts of the data are unobserved). We refer to the function $f^0$ as the \emph{causal function}. We can use it to compute the causal effect of any treatment contrast, such as the average treatment effect $f(1)-f(0)$ in case of a binary $X$. We assume that the causal function $f(\cdot)$ only depends on $X$, i.e., is homogeneous, and thus we do not need to distinguish between local and global treatment effects \citep[e.g][]{angrist1995identification}. Unless stated otherwise, we also assume that $f^0$ can be parameterized such that $f^0(\cdot) = \phi(\cdot)^\top \theta^0$ for some $\theta^0 \in \Theta \subseteq \R^p$, with a (known) function $\phi: \R^d \rightarrow \R^p$ 
and, for simplicity, we assume that $\theta \neq {\theta^0}$ implies $\phi(\cdot)^\top \theta \neq \phi(\cdot)^\top {\theta^0}$. In practice the true basis $\phi$ does not need to be known and can be approximated by a sufficiently flexible function approximator (e.g., neural networks), see Section~\ref{sec:algorithm}.

Let us assume that we observe $n$ i.i.d.\ observations from $(X, Y, Z)\sim\P_{M^0}$. In this paper, we consider two  
problems: 
estimating the causal function $f^0$ and 
predicting the response $Y$ under distributional shifts on the variables $Z$ (see Section~\ref{sec:generalization}). 
In order to solve either of these tasks, we make use of (parts of) the causal function $f^0$. We therefore require that it is uniquely determined by the observed distribution $\P_{M^0}$ (we relax this condition in Section~\ref{sec:generalization}). This is often termed \emph{identifiability} of the causal function.

\begin{table*} 
\caption{Overview of some of the identifiability results, described in Section~\ref{sec:identifiability}. \label{tab:identif}}
\centerline
{\small
 \begin{tabular}{cccccl}
    \toprule    
    \multirow{2}{*}{$|\supp(Z)|$} & {$Z$ acts on} & {$Z$ acts on} & {identif. with~\eqref{eq:moment1}} & {identif. with~\eqref{eq:moment2}} & \multirow{2}{*}{Comments} \\
    & mean of $X$ & higher orders of $X$ &possible&possible& \\
    \toprule
    \rowcolor{mygray}
    \multirow{1}{*}{$\infty$} & \multirow{1}{*}{yes}   & \multirow{1}{*}{yes/no} & \multirow{1}{*}{\cmark} & \multirow{1}{*}{\cmark} & gain efficiency with \eqref{eq:moment2}, cf Sec.~\ref{sec:sim_exp}\\
    $\infty$ & no & yes &\xmark & \cmark & cf Prop~\ref{prop:stronger_identifability}-iii-a\\
    \rowcolor{mygray}
     &&&&&  identif.\ $\leq m$ par.\ with \eqref{eq:moment1}, gain efficiency \\
     \rowcolor{mygray}
    \multirow{-2}{*}{$m<\infty$} & \multirow{-2}{*}{yes}   & \multirow{-2}{*}{yes/no} & \multirow{-2}{*}{\cmark} & \multirow{-2}{*}{\cmark} &  \& identif. with \eqref{eq:moment2},  cf Prop~\ref{prop:stronger_identifability}-iii-b, Sec~\ref{sec:sim_exp}\\
    \multirow{1}{*}{$m<\infty$} & \multirow{1}{*}{no}   & \multirow{1}{*}{yes} & \multirow{1}{*}{\xmark} & \multirow{1}{*}{\cmark} & cf example in proof of Prop~\ref{prop:no-simplification}\\
    \bottomrule
  \end{tabular}
  }
\end{table*}
In a classical IV approach, 
identification of $f^0$ (or, equivalently, $\theta^0$) is based 
on the \emph{moment restriction} 
\begin{equation} \label{eq:moment1}
    \E [\eta(Z)(Y - \phi(X)^\top \theta)] = 0,
\end{equation}
where $\eta: \R^r \rightarrow \R^k$  is a known function.
Under the IV model $M^0$ this condition is satisfied for $\theta^0$. A sufficient and necessary condition for identifiability based on \eqref{eq:moment1} is given by the \emph{(moment) identifiability condition}
\begin{equation} \label{eq:moment1b}
    \E [\eta(Z)\phi(X)^\top] \tau = 0 \quad \Longrightarrow \quad \tau = 0.
\end{equation}
This condition is sometimes called the \emph{rank condition} because it is equivalent to the  matrix $\E [\eta(Z)\phi(X)^\top]$ having rank $p$, which, in particular, implies that $k \geq p$
\citep[e.g.,][]{wooldridge2010econometric}.

The moment restriction~\eqref{eq:moment1} aims to detect 
mean shifts of the residuals when varying the value of $Z$. 
For example,~\eqref{eq:moment1}
can only identify $\theta^0$ if we do not have
for all $i \in \{1, \ldots, p\}$  
$\E[\phi_i(X)|Z] = 0$ almost surely 
(otherwise the function 
$x \mapsto f^0(x) + \phi_i(x)$ solves~\eqref{eq:moment1}, too).
But even if there are no mean shifts and~\eqref{eq:moment1} is not powerful enough to identify $f^0$, we may still be able to identify $f^0$ from 
a stronger condition. To this end, we consider
the \emph{independence restriction}
\begin{equation} \label{eq:moment2}
    Z \ci Y - \phi(X)^\top \theta.
\end{equation}
In this paper, we consider 
identifying $f^0$ based on~\eqref{eq:moment2}, rather than~\eqref{eq:moment1}.
Under the IV model $M^0$ described above, this condition is satisfied for the true parameter $\theta^0$.
A necessary and sufficient condition for identifiability using~\eqref{eq:moment2} is   
\begin{equation}
    \label{eq:moment2c}
    Z \ci h(U, \epsilon_Y) + \phi(X)^\top \tau \quad \Longrightarrow \quad \tau = 0.
\end{equation}
This, in particular, implies that for all $i \in \{1, \ldots, p\}$
we have 
$Z \nci  h(U, \epsilon_Y) + \phi_i(X)$ (since, otherwise $\tau = e_i$ would violate the above condition). 

If condition~\eqref{eq:moment2c} is violated, condition~\eqref{eq:moment1b} is violated, too, but the implication does not hold in the other direction. 
As a result, the independence restriction yields strictly stronger identifiability results (see also 
Table~\ref{tab:identif}). 

\begin{proposition}[Identifiability based on independence]
\label{prop:stronger_identifability}
Consider an IV model $M^0$ and assume 
$f^0(\cdot)=\phi(\cdot)^\top \theta^0$ for some $\theta^0 \in \mathbb{R}^{p}$. 
Then, the following statements hold.
\begin{compactitem}
\item[(i)] If $\theta^0$ is identifiable from the moment restriction~\eqref{eq:moment1} it is also identifiable from the independence restriction \eqref{eq:moment2}. 
\item[(ii)]
There exist IV models such that $\theta^0$ is identifiable from the independence restriction \eqref{eq:moment2} but not 
from~\eqref{eq:moment1}. 
\item[(iii)]
In particular, there are examples of the following type that satisfy the conditions of (ii). (a) `Nonadditive $Z$': 
\begin{align} \label{eq:scm_iv}
\begin{split}
    Z \coloneqq \epsilon_Z  & \qquad 
    U \coloneqq \epsilon_U \qquad
    X \coloneqq ZU + \epsilon_X \qquad \\
    & Y \coloneqq X + U + \epsilon_Y,
    \end{split}
\end{align}
with $\mathcal{F} = \{f\,|\, f(x) = \theta x\}$, 
where $(\epsilon_Z, \epsilon_U, \epsilon_X, \epsilon_Y)$ are  jointly independent standard Gaussian variables.
(b) `Binary $Z$': $Z \in \{0,1\}$, $X \in \R$, $p>2$ (i.e., $\mathcal{F}$ contains nonlinear functions). 
(c) `Independent $Z$': $Z \ci X$. 
\end{compactitem}
\end{proposition}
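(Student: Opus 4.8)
The plan is to take the three parts in turn, with (ii) following from the explicit models in (iii).

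\emph{Part (i).} I would compare $\Theta_{\mathrm{mom}}=\{\theta:\E[\eta(Z)(Y-\phi(X)^\top\theta)]=0\}$ with $\Theta_{\mathrm{ind}}=\{\theta:Z\ci Y-\phi(X)^\top\theta\}$; both contain $\theta^0$ (the latter by validity of the model, since $h(U,\epsilon_Y)\ci Z$ and $\E[h(U,\epsilon_Y)]=0$). The only fact needed is that $Z\ci R$ implies $\Cov(\eta(Z),R)=0$ for every integrable $\eta$. Reading \eqref{eq:moment1} with centered instruments $\E[\eta(Z)]=0$ (consistent with $\E[Z]=0$ and with interpreting \eqref{eq:moment1} as detecting mean shifts), the moment equation at $\theta$ coincides with $\Cov(\eta(Z),Y-\phi(X)^\top\theta)=0$; hence every $\theta\in\Theta_{\mathrm{ind}}$ lies in $\Theta_{\mathrm{mom}}$, i.e.\ $\Theta_{\mathrm{ind}}\subseteq\Theta_{\mathrm{mom}}$. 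If $\theta^0$ is identifiable from \eqref{eq:moment1} then $\Theta_{\mathrm{mom}}=\{\theta^0\}$, and the inclusion together with $\theta^0\in\Theta_{\mathrm{ind}}$ forces $\Theta_{\mathrm{ind}}=\{\theta^0\}$. This is exactly the contrapositive noted before the proposition: a $\tau\neq 0$ violating \eqref{eq:moment2c} gives, after centering $\eta$, $\E[\eta(Z)\phi(X)^\top]\tau=0$, i.e.\ a violation of \eqref{eq:moment1b}.

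\emph{Parts (ii) and (iii-a,c).} For (ii) it is enough to produce one model as in (iii), and in each case I would check separately that \eqref{eq:moment1b} fails while \eqref{eq:moment2} still determines $\theta^0$. Examples (a) and (c) share the same transparent mechanism: conditionally on $Z$ everything is Gaussian, so $Z\ci Y-\phi(X)^\top\theta$ reduces to matching the conditional mean and variance of the residual across the values of $Z$. In (a), $X=ZU+\epsilon_X$ gives $\E[X\mid Z]=0$, hence $\E[\eta(Z)X]=\E[\eta(Z)\E[X\mid Z]]=0$ for all $\eta$ and moment identification is impossible; meanwhile $Y-\theta X=[(1-\theta)Z+1]U+(1-\theta)\epsilon_X+\epsilon_Y$ is, given $Z=z$, centered Gaussian with variance $[(1-\theta)z+1]^2+(1-\theta)^2+1$, which is constant in $z$ if and only if $\theta=1=\theta^0$. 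For (c) I would take $Z$ uniform on $\{-1,1\}$ and $X=ZU$ with $Y=X+U+\epsilon_Y$, so that $X\mid Z=z\sim\mathcal N(0,1)$ for both $z$, giving $Z\ci X$ and again powerless moments, while the residual $[(1-\theta)Z+1]U+\epsilon_Y$ has conditional variance $(2-\theta)^2+1$ at $z=1$ and $\theta^2+1$ at $z=-1$, equal if and only if $\theta=\theta^0$.

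\emph{Part (iii-b).} This is where I expect the real work. When $Z$ is binary and $\eta$ is centered, every centered function of $Z$ is a scalar multiple of $Z-\P(Z=1)$, so $\E[\eta(Z)\phi(X)^\top]$ has rank at most one; as $p>2$ this violates \eqref{eq:moment1b} and moments cannot identify $\theta^0$. For the positive direction I would pick two distinct conditional laws for $X\mid Z=0$ and $X\mid Z=1$ (say Gaussians with different variances), take an additively separable error $h$ (independent of $X$ for simplicity), and reduce $Z\ci Y-\phi(X)^\top\theta$ to the equality in distribution of the polynomial $\phi(X)^\top(\theta^0-\theta)$ under $Z=0$ and $Z=1$. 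The \emph{main obstacle} is to show that this distributional equality forces $\theta=\theta^0$: I would match the moments (equivalently cumulants) of $\phi(X)^\top\tau$ across the two levels and use the differing growth rates of the moments of the two conditional laws to argue that the resulting infinite system of equations admits only the solution $\tau=0$. Alternatively, one can import a suitable construction directly from the binary-instrument identification results of \citet{Dunker2014, Torgovitsky2015, Loh2019}.
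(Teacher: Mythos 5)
Your parts (i), (ii), (iii-a) and (iii-c) are correct and essentially coincide with the paper's proof. For (i), the paper argues by contraposition between \eqref{eq:moment2c} and \eqref{eq:moment1b}, which is logically the same as your inclusion $\Theta_{\mathrm{ind}}\subseteq\Theta_{\mathrm{mom}}$; both arguments rest on the same centering convention for $\eta$ (the paper silently identifies $\E[\eta(Z)\phi(X)^\top]\tau=0$ with $\Cov[\eta(Z),\phi(X)^\top\tau]=0$; you make this explicit, which is if anything cleaner, since \eqref{eq:moment2} is insensitive to adding a constant to the residual while \eqref{eq:moment1} with non-centered $\eta$ is not). For (iii-a) the paper certifies dependence via $\E[Z^2K_\tau^2]\neq\E[Z^2]\E[K_\tau^2]$, whereas you match conditional variances given $Z=z$ using conditional Gaussianity; this is the same second-moment idea, and your version yields the ``if and only if'' directly. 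For (iii-c) you propose a different but valid example ($X=ZU$ with $Z$ uniform on $\{-1,1\}$, versus the paper's $X=2ZU-U+\epsilon_X$ with $Z$ Bernoulli), verified by the same conditional-variance computation.

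The genuine divergence is (iii-b), where your proposal is a sound plan but not a finished proof. The paper constructs a mean-shift SCM $X=Z+U+\epsilon_X$, $Y=\sum_{i=1}^p\alpha_iX^i+U+\epsilon_Y$ (keeping genuine confounding), reduces independence across the finitely many values of $Z$ to equality of the coefficients of the powers of $\epsilon_X$, justifies this via an assumed almost-sure linear independence of $\{\epsilon_X^iU^j\}$, and then verifies that assumption for a concrete Bernoulli, $p=3$ instance by exhibiting an invertible $9\times9$ moment system. You instead let $Z$ act on the scale of $X$ (two Gaussian conditional laws with different variances), take $h$ independent of $X$, and reduce by deconvolution to the claim that a nonzero polynomial $q=\phi(\cdot)^\top\tau$ cannot have the same law under both conditional distributions. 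This route works, but two steps are only asserted: first, the deconvolution step requires the characteristic function of $h$ to have no zeros (true for Gaussian $h$, but it should be said); second, the ``growth rates of moments'' argument must actually be carried out, e.g.\ if $\deg q=m\geq1$ then $\P\big(|q(X)|>t\mid Z=z\big)$ decays like $\exp(-c_z\,t^{2/m})$ with $c_z$ inversely proportional to the conditional variance, so different variances give different tails, or equivalently $\E[q(X)^{2n}\mid Z=1]\big/\E[q(X)^{2n}\mid Z=0]\to\infty$ as $n\to\infty$. With these supplied (or by importing a construction from the binary-instrument literature, as you suggest), your (iii-b) is complete. Two minor remarks: your rank-one claim for centered $\eta$ is unnecessary, since even without centering the rank is at most $2<p$; and dropping the confounder makes your example formally admissible but degenerate (OLS would already be consistent there), whereas the paper's example retains confounding.
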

Statements (i) and (ii) are known 
\citep[e.g.,][]{Imbens2009, Chesher2003}
but for completeness we nevertheless include their proofs in Appendix~\ref{sec:proofs}. 
Intuitively, the causal function from the SCM \eqref{eq:scm_iv} in (a) is not identifiable from the moment restriction because the instrument (or any transformation $\eta(Z)$) does not correlate with the mean of $X$, i.e., $\E[\eta(Z)X]=\E[\eta(Z)\E[X|Z]]=0$. Therefore, for any $\gamma\in\R$, the shifted causal function $f(x)\coloneqq f^0(x)+\gamma x$, for $x\in\R$, also satisfies the moment restriction. 
In the proof of Proposition~\ref{prop:stronger_identifability}, we argue that identifiability using~\eqref{eq:moment1b} is impossible for an example of type (b). It may come as a surprise that it is indeed possible to identify  nonlinear functions even if the instrument $Z$ has a discrete support with small cardinality. This observation has been reported, e.g., by \citet{Dunker2014, Torgovitsky2015, Loh2019}. 

Even in cases where the causal function is identifiable from both~\eqref{eq:moment2c} and~\eqref{eq:moment1b}, it may be beneficial to consider the independence restriction. This is, for example, the case when the effect of the instrument can be seen in both the conditional mean of $\phi(X)$, given $Z$, and in higher moments. In our simulation experiments in Section~\ref{sec:exp}, we see that taking into account dependencies in higher moments may yield better statistical performance.

Condition~\eqref{eq:moment2c} depends on the unknown $h(U,\epsilon_Y)$; 
at first glance, one might believe 
that the hidden term can be dropped from condition~\eqref{eq:moment2c} but this is not the case.
\begin{proposition} \label{prop:non-reducible}
Consider the IV model $M^0$ and assume that $\phi(\cdot)=(\phi_1(\cdot),\dots,\phi_p(\cdot))$ is a collection of basis functions such that $f^0(\cdot)=\phi(\cdot)^\top \theta^0$ for some $\theta^0\in \mathbb{R}^{p}$. Then, in general, condition~\eqref{eq:moment2c} does not imply
\begin{equation}
    Z \ci \phi(X)^\top \tau \quad \Longrightarrow \quad \tau = 0.
    \label{eq:moment2d}
\end{equation}
Furthermore,~\eqref{eq:moment2d} does not imply~\eqref{eq:moment2c}, either. 
\label{prop:no-simplification}
\end{proposition}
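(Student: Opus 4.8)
The plan is to establish both non-implications by explicit counterexamples, and the convenient observation is that a single SCM family suffices: I would keep the nonadditive structure of \eqref{eq:scm_iv}, i.e.\ $X \coloneqq ZU + \epsilon_X$ and $Y \coloneqq X + U + \epsilon_Y$ with $U,\epsilon_X,\epsilon_Y \simiid \mathcal N(0,1)$, $\mathcal F = \{x \mapsto \theta x\}$, $f^0(x)=x$ and $h(U,\epsilon_Y) = U + \epsilon_Y$, and vary only the law of the finite-support instrument $Z$ (always with $\E[Z]=0$, so the normalizations hold). Writing $\tau = \theta^0 - \theta$, the two residuals to compare are $R_\tau = (U + \epsilon_Y) + \tau X$ (appearing in \eqref{eq:moment2c}) and $\tau X$ (appearing in \eqref{eq:moment2d}). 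The key simplification I would exploit is that, conditionally on each value of $Z$, both $R_\tau$ and $\tau X$ are mean-zero Gaussian; hence, for an instrument with finite support, the independence statements $Z \ci R_\tau$ and $Z \ci \tau X$ are each equivalent to the corresponding conditional \emph{variance} being constant across $\supp(Z)$. This reduces everything to reading off roots of quadratics in $\tau$.

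For the first claim (\eqref{eq:moment2c} does not imply \eqref{eq:moment2d}), I would take $Z$ uniform on $\{-1,+1\}$. Then $\operatorname{Var}(X \mid Z) = Z^2 + 1 = 2$ is constant, so $X \ci Z$ and therefore $\tau X \ci Z$ for \emph{every} $\tau$; in particular \eqref{eq:moment2d} is violated. On the other hand $\operatorname{Var}(R_\tau \mid Z) = (1 + \tau Z)^2 + 1 + \tau^2$, and equating the values at $Z = +1$ and $Z = -1$ gives $(1+\tau)^2 = (1-\tau)^2$, forcing $\tau = 0$; hence $Z \ci R_\tau$ only for $\tau = 0$ and \eqref{eq:moment2c} holds.

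For the second claim (\eqref{eq:moment2d} does not imply \eqref{eq:moment2c}), I would instead use a two-point instrument whose support is not symmetric about $0$, e.g.\ $Z \in \{z_1,z_2\} = \{-1, 2\}$ with $\P(Z = -1) = 2/3$ (so that $\E[Z] = 0$). Now $\operatorname{Var}(\tau X \mid Z) = \tau^2(Z^2 + 1)$ takes the two distinct values $2\tau^2$ and $5\tau^2$, so it is non-constant for every $\tau \ne 0$ and \eqref{eq:moment2d} holds. However, equating $\operatorname{Var}(R_\tau \mid Z) = (1 + \tau Z)^2 + 1 + \tau^2$ across the two support points reduces to $(1 + \tau z_1)^2 = (1 + \tau z_2)^2$, which besides $\tau = 0$ has the spurious root $\tau_0 = -2/(z_1 + z_2) = -2 \ne 0$; at $\tau_0$ the two conditional variances coincide, so $Z \ci R_{\tau_0}$ and \eqref{eq:moment2c} fails.

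The genuinely delicate points, which I would take care to justify rather than assert, are two. First, that for these models independence is \emph{equivalent} to equality of conditional variances: this is where I use joint normality conditional on $Z$ together with the finiteness of $\supp(Z)$ (so that equality of a finite collection of mean-zero Gaussian laws is exactly equality of their variances), which upgrades the weaker `zero conditional covariance with $Z$' to full independence as required by \eqref{eq:moment2c}--\eqref{eq:moment2d}. Second, in each direction where a condition is claimed to \emph{hold}, I must check that \emph{no} nonzero $\tau$ slips through, not merely that $\tau = 0$ works; concretely this is the statement that the relevant quadratic has $\tau = 0$ as its only real root, which is immediate for the symmetric support in the first claim and is precisely what fails (by design, via the asymmetric support) in the second. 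The conceptual crux is the choice of the two supports: a support symmetric about $0$ makes $X \ci Z$ (killing \eqref{eq:moment2d}) while the coupling through $U$ keeps $R_\tau$ informative, whereas an asymmetric two-point support keeps $\tau X$ informative but lets the added confounder term $U+\epsilon_Y$ cancel the $Z$-dependence of $\tau X$ exactly at $\tau_0$.
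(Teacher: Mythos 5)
Your proposal is correct, and it is essentially the paper's own argument: the paper also proves both non-implications by explicit counterexamples with a two-point instrument, a Gaussian confounder entering $X$ with a $Z$-dependent coefficient, and independence checked by comparing conditional second moments (its first example, $X=2ZU-U+\epsilon_X$ with $Z\sim\mathrm{Bernoulli}(0.5)$, is exactly your $X=ZU+\epsilon_X$ with $Z$ uniform on $\{-1,+1\}$ after relabeling, and its second example plays the same coefficient-cancellation trick as your asymmetric support $\{-1,2\}$). Your packaging—one SCM family with only the law of $Z$ varied, plus the explicit reduction of independence to equality of conditional variances for conditionally Gaussian residuals—is a clean but cosmetic reorganization of the same proof.
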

Identifiability condition~\eqref{eq:moment1b} depends only on the joint distribution of $(Z,X)$ (which can be estimated from data), but we do not consider this an advantage: in practice, it is usually desirable to consider empirical relaxations of the identifiability conditions
and output the set of $\theta$'s that (approximately) satisfy the empirical version of~\eqref{eq:moment1} or~\eqref{eq:moment2}, respectively. For example, one can invert statistical tests to construct confidence sets, see Section~\ref{sec:confint}.

\subsection{Conditional Instrumental Variables}
\label{sec: CIV methodology}
In some applications, such as the one we consider in Section~\ref{sec:real_exp}, restrictions~\eqref{eq:moment1} and~\eqref{eq:moment2}
may be violated for the causal parameter, e.g., because there are confounding variables $W$ between the instruments $Z$ and the response $Y$. 
Under certain assumptions, however, the framework of conditional IV (CIV) \citep{frolich2007nonparametric,Newey2013} still allows us to identify the causal function $f^0$ from the observed distribution. 
Both the identifiability point of view and the methodology we develop in Section~\ref{sec:algorithm} can be extended to CIV. More details are provided in Appendix~\ref{sec:app_implmentation_civ}. 
Consider the following SCM.

\begin{minipage}{0.5\linewidth}
  \begin{align*}
    W&\coloneqq m(\epsilon_W,V,U)\\
    Z&\coloneqq q(W,V,\epsilon_Z)\\
    X&\coloneqq g(Z, W, U, \epsilon_X)\\
    V&\coloneqq \epsilon_{V}, \; 
    U \coloneqq \epsilon_{U}\\
    Y&\coloneqq f(X) + h(W,U, \epsilon_Y)
  \end{align*}
  \hspace{0.5em}
\end{minipage}%
\hspace{-0.08\linewidth}
\begin{minipage}{0.56\linewidth}
\vspace{-0.08\linewidth}
  \resizebox{0.94\textwidth}{!}{
    \begin{tikzpicture}[scale=1.4]
      \tikzstyle{VertexStyle} = [shape = circle, minimum width = 3em,
      fill=lightgray] \Vertex[Math,L=Y,x=1,y=0]{Y}
      \Vertex[Math,L=X,x=-1,y=0]{X}
      \Vertex[Math,L=Z,x=-3,y=0]{Z}
      \Vertex[Math,L=W,x=-1,y=1.5]{W}
      \tikzset{EdgeStyle/.append style = {-Latex, line width=1}}
      \Edge[label=$f$](X)(Y)
      \Edge[label=$g$](Z)(X)
      \Edge[label=$q$](W)(Z)
      \Edge[label=$h$](W)(Y)
      \Edge[label=$g$](W)(X)
      \tikzstyle{EdgeStyle}=[bend right=50, line width=1, Latex-Latex, dashed]
      \Edge[label={\Large $U$}](X)(Y)
     \tikzstyle{EdgeStyle}=[bend left=50, line width=1, Latex-Latex, dashed]
     \Edge[label={\Large $V$}](Z)(W)
      \tikzstyle{EdgeStyle}=[bend left=50, line width=1, Latex-Latex, dashed]
      \Edge[label={\Large $U$}](W)(Y)
    \end{tikzpicture}}
\end{minipage}
where $(\epsilon_W,\epsilon_Z, \epsilon_{V}, \epsilon_{U}, \epsilon_X, \epsilon_Y)$ are jointly independent  %
and 
$W\in\mathbb{R}^t$ are observed covariates. %
We assume that 
there is no edge from $V$ to $W$ or 
no edge from $U$ to $W$,
i.e., either $Z$ and $W$ or $W$ and $Y$ are not confounded. 

Due to the unobserved confounding, 
\eqref{eq:moment2} may not hold 
but as we assume that there is either no confounding between $Z$ and $W$ or between $W$ and $Y$, we can instead use the \textit{conditional independence restriction}
\begin{equation}
    Z \ci Y-\phi(X)^\top \theta \,| \,W
    \label{condition: CIS 1}
\end{equation}
to identify $f^0$. A corresponding sufficient and necessary identifiability condition for $f^0$ is 
\begin{equation}
    Z \ci h(W,U,\epsilon_Y) + \phi(X)^\top \tau \,| \,W \, \, \implies \tau=0.
    \label{condition: ident. CIS 1}
\end{equation}
We can estimate $f^0$ using a loss that is minimized if restriction~\eqref{condition: CIS 1} is satisfied, e.g., using a conditional independence measure \citep[e.g.][]{Fukumizu2008,Zhang2011uai,berrett2019nonparametric,Shah2018}.
In Appendix~\ref{sec:app_implmentation_civ} we discuss cases, in which we can avoid using a conditional independence restriction.

\section{Independence-based IV with HSIC}\label{sec:algorithm}
Starting from the independence restriction~\eqref{eq:moment2}, our goal is to find a function $\hat{f}$ such that the residuals $R^{\hat{f}} \coloneqq Y - \hat{f}(X)$ are independent of the instruments $Z$. In the identifiable case, that is, if condition~\eqref{eq:moment2c} is satisfied, only the causal function $f^0$ achieves independence.
Thus, given an i.i.d.\ sample $(x_i, y_i, z_i)_{i=1}^n$ 
of the variables $(X, Y, Z)$, our method aims to find a function $\hat{f}$ that minimizes the dependency between the residuals $(r_i^{\hat{f}})_{i=1}^n$, with $r_i^{\hat f} \coloneqq y_i - \hat{f}(x_i)$, and the instruments $(z_i)_{i=1}^n$.
In this work, 
we measure dependency using HSIC \citep{Gretton2008}.
When using characteristic kernels \citep{Fukumizu2008}, this measure equals zero if and only if the considered joint distribution factorizes.
HSIC has been used for optimization problems before \citep[e.g.,][]{greenfeld20a, Mooij2009icml} and satisfies the conditions used to prove consistency (see Section~\ref{sec:consistency}) but other choices of independence measures are possible, too.
Specifically, we consider the following learning problem:
\begin{equation}
\label{eq:hsic_objective}
    \hat{f} \coloneqq \argmin_{f \in \mathcal{F}} \;  \widehat{\HSIC}((r_i^f , z_i)_{i=1}^n; k_{R^f}, k_Z),
\end{equation}
where $\widehat{\HSIC}((r_i^f , z_i)_{i=1}^n; k_{R^f}, k_Z) \coloneqq \trace(KHLH)$ is a 
consistent estimator of $\HSIC((R^f,Z);k_{R^f},k_Z)$ \citep{Gretton2008}; here, $K_{ij} = k_{R^f}(r^f_i, r^f_j)$ and $L_{ij} = k_Z(z_i, z_j)$ are the kernel matrices for the residuals $R^{f}$ and the instruments $Z$ with positive definite kernels $k_{R^f}$ and $k_Z$, respectively, 
 and $H_{ij} := \delta_{ij} - \frac{1}{n}$ is the centering matrix. 
We call the estimator in \eqref{eq:hsic_objective} HSIC-X.

The independence restriction~\eqref{eq:moment2} allows us to learn the causal function $f^0$ up to a bias term (for any $\alpha \in \R$, $\HSIC((Y - f^0(X), Z); k_{R^{f^0}},k_Z)$ and $\HSIC((Y - \alpha - f^0(X), Z); k_{R^{f^0}},k_Z)$ are identical). Nonetheless, we can correct for the bias by using the zero mean assumption of the noise $\epsilon_Y$. The final estimate is then obtained as $\tilde{f}(\cdot) \coloneqq \hat{f}(\cdot) - \frac{1}{n}\sum_{i=1}^n( y_i - \hat{f}(x_i))$.

\subsection{Regularizing towards Predictive Functions}\label{sec:hsic_pen}
\label{sec:HSIC-X-Pen} 

In many practical applications 
the identifiability condition~\eqref{eq:moment2c} 
(or~\eqref{eq:moment1b} for classical IV) 
is satisfied 
but many parameters approximately solve the empirical version of~\eqref{eq:moment2}.
This is the case, for example, if the influence of the instruments is weak, which usually yields subpar finite sample properties.
Furthermore, classical estimators like 2SLS
are known to only have moments up to the degree of over-identification \citep[e.g.,][]{Mariano2001}.
To stabilize the estimation it has been proposed to regularize towards a predictive function, such as the OLS in linear settings, see, e.g., the K-class estimators \citep{Theil1958,  Jakobsen2020}, which contain the OLS, 2SLS, the FULLER \citep{Fuller1977} and LIML estimators \citep{AndersonRubin} as special cases.

We propose an analogous regularization for our estimator 
and call this variant HSIC-X-pen. More specifically, for a 
convex loss function $\ell:\R\rightarrow\R$ we modify the optimization problem~\eqref{eq:hsic_objective} as follows,
\begin{align} 
    \hat{f}^{\lambda}=\argmin_{f\in\mathcal{F}}\, 
    \widehat{\HSIC}&((r_i^{f}, z_i)_{i=1}^n; k_{R^f},k_Z)
    \label{eq:penalized_HSIC_IV}\\
    &\qquad + 
    \lambda \textstyle\sum_{i=1}^n\ell(y_i-f(x_i)),
    \nonumber
\end{align}
where $\lambda\in[0, \infty)$ is a tuning parameter. 
Unlike in the linear settings described above \citep[e.g.,][]{Fuller1977}, deriving a data-driven choice of the tuning parameter $\lambda$ is non-trivial. We propose to select $\lambda$ following a procedure analogue to the one described in \citet{Jakobsen2020}: we select the largest possible value of $\lambda$ for which an HSIC-based independence test \citep[e.g.,][]{Gretton2008, Pfister2017jrssb} between 
$R^{\hat{f}^{\lambda}}$ and 
$Z$ is not rejected.

As discussed in Section~\ref{sec:generalization}, HSIC-X-pen can be understood in relation to distribution generalization, too. There, one starts from the objective of optimizing the predictive loss and adds the HSIC term as a penalty that regularizes the predictor to guard against distributional shifts.

\subsection{Algorithm and Implementation Details}
We now specify the details of HSIC-X. To solve
\eqref{eq:hsic_objective}, we fix any
parametric function class $\mathcal{F} \coloneqq \{f_{\theta}(\cdot) \mid \theta \in \Theta \subseteq \R^p\}$ (e.g., a linear combination of some basis functions or a neural network)
and optimize the parameters $\theta$ by a gradient-based optimization method. We choose the 
Gaussian kernel $k_{R}$ \citep[e.g.,][]{learningkernels} for the residuals, and the discrete or Gaussian kernel $k_{Z}$ for the instruments depending on whether $Z$ is discrete or continuous, respectively.
The bandwidth parameter $\sigma$ of the
Gaussian kernel is chosen by the median heuristic
\citep[e.g.,][]{Sriperumbudur2009} and is recomputed  during the optimization process (as the residuals change at each iteration).

Since the optimization problem \eqref{eq:hsic_objective} is generally non-convex, the resulting parameter estimates may not be the global optimal solution. We alleviate this problem by introducing the following restarting heuristic. Let $\hat{\theta} \in \Theta$ be a solution to the optimization problem. We conduct an independence test between the resulting residuals $(r_i^{\hat{\theta}} \coloneqq y_i - f_{\hat{\theta}}(x_i))_{i=1}^n$ and the instruments $(z_i)_{i=1}^n$ using HSIC with Gamma approximation \citep{Gretton2008}. We accept the parameters $\hat{\theta}$ if the test is not rejected, otherwise we randomly re-initialize the parameters and restart the optimization. In the spirit of \eqref{eq:penalized_HSIC_IV}, we initialize the parameters in the first trial at the OLS solution. Algorithm~\ref{alg:ind_iv} in Appendix~\ref{sec:app_algo} illustrates the whole optimization procedure with the standard gradient descent update. The gradient step can be replaced with other gradient-based optimization algorithms such as Adam \citep{kingma2014adam} or Adagrad \citep{duchi2011adaptive};
in all of our experiments, we used Adam. 
The Algorithm for HSIC-X-pen is also provided in Appendix~\ref{sec:app_algo}.

\subsection{Consistency}
\label{sec:consistency}
We now prove consistency of the proposed approach in that the
minimizer of~\eqref{eq:hsic_objective}
converges (in probability) against the causal function, as sample size increases.\footnote{
There is a slight mismatch between Theorem~\ref{thm:consistency} and the described algorithm: in practice, the kernel bandwidth is not fixed but is chosen according to the median heuristic. This difference could be accounted for by sample splitting. 
}
\begin{theorem}
Consider the IV model $M^0$, assume that  $f^0(\cdot)=\phi(\cdot)^\top \theta^0$ for some 
bounded function $\phi(\cdot)$ and some 
$\theta^0\in \Theta$, with 
$\Theta \subseteq \R^p$ being compact, and assume that the identifiability condition~\eqref{eq:moment2c} holds.
Consider the function class 
$\mathcal{F} := \{f(\cdot) = \phi(\cdot)^\top \theta\,|\,
\theta \in \Theta\}$ and fixed bounded, continuously differentiable, characteristic kernels $k_Z$ and $k_R$ with bounded derivatives. 
Then, the estimator $\hat{f}$ defined in~\eqref{eq:hsic_objective} is consistent, i.e., 
$
\|\hat{f} - f^0\|_{\infty} \overset{\P_{M^0}}{\longrightarrow} 0$. 

The same statement holds if in~\eqref{eq:hsic_objective} we replace $\HSIC$ and $\widehat{\HSIC}$ by any independence measure $H(\mathbb{P}_{M^0},\theta)$ and its estimate
$\hat{H}_n(\mathcal{D}_n, \theta)$ (based on data $\mathcal{D}_n$), such that (i) $H(\mathbb{P}_{M^0},\theta) = 0$ if and only if $(Y - \phi(X)^\top\theta) \independent Z$, 
(ii) for all $\theta \in \Theta$, we have $\hat{H}_n(\mathcal{D}_n, \theta) \rightarrow H(\mathbb{P}_{M^0},\theta)$ in probability 
and, (iii), both 
$H(\mathbb{P}_{M^0},\theta)$ and 
$\hat{H}_n(\mathcal{D}_n, \theta)$ for all $n$ are Lipschitz continuous in $\theta$ with the same constant $L$.
\label{thm:consistency}
\end{theorem}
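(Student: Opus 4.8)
The plan is to prove the abstract version first (for a measure satisfying (i)--(iii)) and then verify that HSIC with the stated kernel assumptions is a special case. Throughout I identify $f$ with its parameter and write $H(\theta) := H(\mathbb{P}_{M^0},\theta)$ and $\hat H_n(\theta) := \hat H_n(\mathcal{D}_n,\theta)$. Since $\phi$ is bounded, $\|\hat f - f^0\|_\infty = \sup_x |\phi(x)^\top(\hat\theta - \theta^0)| \le (\sup_x \|\phi(x)\|)\,\|\hat\theta - \theta^0\|$, so it suffices to establish $\hat\theta \to \theta^0$ in probability; the remaining four ingredients are identification, well-separatedness, uniform convergence, and a standard argmin step.

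First I would show that $\theta^0$ is the unique, well-separated minimizer of the population objective. Writing $Y - \phi(X)^\top\theta = h(U,\epsilon_Y) + \phi(X)^\top(\theta^0 - \theta)$, the identifiability condition~\eqref{eq:moment2c} applied with $\tau = \theta^0 - \theta$ shows that $Z \ci Y - \phi(X)^\top\theta$ holds if and only if $\theta = \theta^0$; combined with (i) and with $H \ge 0$ (the natural convention for an independence measure, and valid for HSIC) this gives $H(\theta^0) = 0$ and $H(\theta) > 0$ for $\theta \ne \theta^0$. Because $H$ is Lipschitz by (iii), hence continuous, and $\Theta$ is compact, for every $\epsilon > 0$ the quantity $\delta_\epsilon := \inf\{H(\theta) : \theta \in \Theta,\ \|\theta - \theta^0\| \ge \epsilon\}$ is attained and strictly positive; this is the separation I will need.

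The key step, and the one where I expect the real work to lie, is upgrading the pointwise convergence in (ii) to uniform convergence $\sup_{\theta \in \Theta} |\hat H_n(\theta) - H(\theta)| \to 0$ in probability. The argument is a covering/equicontinuity argument enabled precisely by the common Lipschitz constant in (iii): cover the compact set $\Theta$ by finitely many balls of radius $r$ about grid points $\theta_1,\dots,\theta_N$; for $\theta$ in the ball around $\theta_j$ the triangle inequality together with (iii) applied to both $\hat H_n$ and $H$ gives $|\hat H_n(\theta) - H(\theta)| \le 2Lr + |\hat H_n(\theta_j) - H(\theta_j)|$. Letting $n \to \infty$ (the finite maximum over the grid vanishes by (ii)) and then $r \to 0$ yields the uniform bound.

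With uniform convergence in hand the conclusion is the usual argmin estimate: since $\hat\theta$ minimizes $\hat H_n$ and $H(\theta^0) = 0$, one has $H(\hat\theta) \le \hat H_n(\hat\theta) + \sup_\theta|\hat H_n - H| \le \hat H_n(\theta^0) + \sup_\theta|\hat H_n - H| \le 2\sup_\theta|\hat H_n - H| \to 0$ in probability. By well-separatedness $\{\|\hat\theta - \theta^0\| \ge \epsilon\} \subseteq \{H(\hat\theta) \ge \delta_\epsilon\}$, whose probability vanishes, so $\hat\theta \to \theta^0$ and hence $\|\hat f - f^0\|_\infty \to 0$. Finally, to recover the explicit HSIC statement I would check (i)--(iii) for $H = \HSIC$: (i) is the characteristic-kernel characterization of independence of \citet{Gretton2008}; (ii) is the known consistency of the estimator $\trace(KHLH)$; and (iii), the common-constant Lipschitz continuity in $\theta$ both for the population HSIC and uniformly in $n$ for its estimator, is the one place requiring the smoothness hypotheses (boundedness of $\phi$ together with bounded, continuously differentiable kernels with bounded derivatives) to bound $\partial_\theta \HSIC$ uniformly over the sample. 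Verifying (iii) is the main obstacle.
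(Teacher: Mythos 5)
Your overall architecture coincides with the paper's: reduce sup-norm consistency to $\hat\theta\to\theta^0$ via boundedness of $\phi$, establish that $\theta^0$ is the unique, well-separated minimizer of the population objective using \eqref{eq:moment2c}, upgrade pointwise to uniform convergence over the compact $\Theta$ using the common Lipschitz constant, and finish with the standard argmin estimate. Your covering argument for the uniform-convergence step is precisely the content of Corollary~2.2 of \citet{newey1991uniform}, which the paper cites instead of reproving, so that difference is cosmetic (your inline proof is arguably more self-contained). Your remark that condition (i) alone does not make $\theta^0$ a minimizer without $H\geq 0$ is a fair sharpening of the abstract statement; HSIC is non-negative, so nothing breaks for the main claim.

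The genuine gap is that you never verify condition (iii) for HSIC, and that verification is the bulk of the paper's actual proof: the first (and main) claim of the theorem is about HSIC-X specifically, so flagging (iii) as ``the main obstacle'' and stopping leaves that claim unproven. The paper handles the empirical side via Lemma~16 of \citet{Mooij2016jmlr}, which gives
$\widehat{\HSIC}((R_i^{\theta_1},Z_i)_{i=1}^n)-\widehat{\HSIC}((R_i^{\theta_2},Z_i)_{i=1}^n)\leq \frac{32\lambda C}{\sqrt{n}}\,\|R^{\theta_1}-R^{\theta_2}\|$,
where $\lambda$ is the Lipschitz constant of $k_R$ and $C$ bounds $k_Z$; the crucial point is that $\|R^{\theta_1}-R^{\theta_2}\|=\|\phi(X)^\top(\theta_1-\theta_2)\|\leq \sqrt{n}\,M\,\|\theta_1-\theta_2\|$ (with $M$ a bound related to $\|\phi(\cdot)\|$) grows like $\sqrt{n}$, which exactly cancels the $1/\sqrt{n}$ prefactor and yields an $n$-free Lipschitz constant. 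Your proposed alternative of bounding $\partial_\theta\widehat{\HSIC}$ directly can be made to work (with the normalized version of $\trace(KHLH)$, the $n^2$ entries of the derivative are each $O(1)$ under your kernel assumptions), but you must exhibit this cancellation explicitly: it is exactly the step where a naive bound produces a Lipschitz constant growing with $n$, violating the uniformity in $n$ that your covering argument needs. On the population side, the paper writes HSIC as an expectation of kernel evaluations (Proposition~2.5 of \citet{Pfister2017jrssb}) and obtains Lipschitzness from continuous differentiability via dominated convergence plus compactness of $\Theta$; pointwise convergence (your condition (ii)) is Corollary~15 of \citet{Mooij2016jmlr}. With these three verifications added, your proof would be complete and essentially identical to the paper's.
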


\subsection{Confidence Regions} \label{sec:confint}
Suppose 
$T_n: \Theta \times \mathbb{R}^{r\times n} \rightarrow \{0,1\}$ tests,  for a given parameter $\theta \in \Theta$, for independence between the residuals $R^\theta := Y - f_\theta(X)$ and the instruments $Z$, based on the $n$
i.i.d.\  observations $(X_i,Y_i,Z_i)_{i=1}^n$. 
Denote by $\Theta_0 \coloneqq \{\theta \,|\, R^\theta \independent Z\}$ the set of $\theta$'s satisfying the null hypothesis of independence.
We say $T_n$ has \emph{pointwise asymptotic level $\alpha$} if 
 $\sup_{\theta \in \Theta_0} \lim_{n \rightarrow \infty} \P(T_n(\theta, (Z_i)_{i=1}^n) = 1) \leq \alpha$
(here, $T_n=1$ corresponds to rejecting independence).
The test has \emph{uniform asymptotic level} if 
`lim' and `sup'
in the definition
can be exchanged and the statement still holds; it has \emph{finite sample level} if
it holds for all $n$ when removing 
`lim'.
If $T_n$ has finite sample level, then, by construction,
\begin{equation*}
\hat C_n := \{\theta \in \Theta \,|\, T_n(\theta, (Z_i)_{i=1}^n) = 0\}
\end{equation*}
is a $(1-\alpha)$-confidence region for $\theta^0$: %
$\P(\theta^0\in C_n ) = \P(T_n(\theta^0, (Z_i)_{i=1}^n) = 0) \geq 1 - \alpha$. Similarly for pointwise (uniform) asymptotic level. 

Because the independence restriction is stronger than the moment restriction (see Proposition~\ref{prop:stronger_identifability}), 
we can use any existing test for~\eqref{eq:moment1}, such as the Anderson-Rubin test \citep{AndersonRubin}, to construct such confidence regions, too.
Alternatively, we can use $\HSIC$ to construct an independence test, which we then invert. 
For a fixed kernel, tests can be constructed either by permutation-based procedures or by approximating 
the distributon of $\widehat{\HSIC}$ under the null hypothesis,
e.g., by a gamma distribution \citep[][]{Gretton2008}.
For many tests, $\hat{C_n}$ has to be approximated.

\section{Distribution Generalization} \label{sec:generalization}

Here, we follow a line of work, which 
connects distribution generalization with causality (see Section~\ref{sec:intro}). In this framework, distributional shifts are modeled as interventions. 
We propose using HSIC-X-pen, introduced in Section~\ref{sec:hsic_pen}, for this task
and prove that it is worst-case optimal under interventions on the exogenous variables, even if the model is nonlinear and the causal function is not identifiable.

To formalize the result, we again assume that the data is generated by the IV model $M^0$. Furthermore, in this section, we consider the function classes $\mathcal{F}\coloneqq \mathcal{L}^2(\R^d,\P_{M^0}^{X})$ consisting of all functions $f_{\diamond}:\R^d\rightarrow\R$ that satisfy $\E_{M^0}[f_{\diamond}(X)^2]<\infty$ and
\begin{equation*}
    \mathcal{F}_{\text{inv}}\coloneqq\left\{f_{\diamond}\in \mathcal{F}\mid Z\ci Y-f_{\diamond}(X) \text{ under } \P_{M^0}\right\}.
\end{equation*}
We model a distributional shift as %
an intervention on the exogenous $Z$, denoted by $i$; it consists of replacing the distribution of $Z$ with a new distribution.
The intervened model is denoted by $M^0(i)$ and is again an IV model. In this setting, we now prove that for any predictor $f\in\mathcal{F}$ satisfying the independence restriction $Z\ci Y-f(X)$, that is, $f\in\mathcal{F}_{\text{inv}}$, the expected loss is invariant to interventions on $Z$ in the following sense.
\begin{theorem}[Invariance with respect to interventions on $Z$]
\label{thm:invariant}
Let $\ell:\R\rightarrow\R$ be a convex loss function and $\mathcal{I}$ be a set of interventions on $Z$ satisfying
for all $i\in\mathcal{I}$ that $\P_{M^0(i)}$ is dominated by $\P_{M^0}$. Then, for all $f\in\mathcal{F}_{\text{inv}}$ it holds that
\begin{equation*}
    \E_{M^0}\big[\ell(Y-f(X))\big]=\sup_{i\in\mathcal{I}}\E_{M^0(i)}\big[\ell(Y-f(X))\big].
\end{equation*}
\end{theorem}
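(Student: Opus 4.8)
The plan is to prove the stronger, pointwise statement that $\E_{M^0(i)}[\ell(Y-f(X))] = \E_{M^0}[\ell(Y-f(X))]$ for \emph{every} $i\in\mathcal{I}$; the claimed equality with the supremum is then immediate (assuming, as implicitly required, that $\mathcal{I}$ is nonempty). Write $R^f := Y - f(X)$. Substituting the structural assignments of $M^0$ gives $R^f = \psi(Z, U, \epsilon_X, \epsilon_Y)$, where $\psi(z,u,e_x,e_y) := f^0(g(z,u,e_x)) + h(u,e_y) - f(g(z,u,e_x))$ is a fixed measurable map that does \emph{not} depend on the intervention. An intervention $i$ on $Z$ only replaces the law of $\epsilon_Z$ (equivalently of $Z$), leaving the joint law of $(U,\epsilon_X,\epsilon_Y)$ unchanged and keeping $Z$ independent of $(U,\epsilon_X,\epsilon_Y)$. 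Hence the regular conditional distribution $\mu_z$ of $R^f$ given $Z=z$ — namely the distribution of $\psi(z,U,\epsilon_X,\epsilon_Y)$ with $(U,\epsilon_X,\epsilon_Y)$ drawn from its (unchanged) law — is the \emph{same} function of $z$ under $\P_{M^0}$ and under $\P_{M^0(i)}$. These conditional distributions exist since all the spaces involved are standard Borel.

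Next I would bring in the invariance hypothesis $f\in\mathcal{F}_{\text{inv}}$, i.e.\ $Z \ci R^f$ under $\P_{M^0}$. This forces $\mu_z = \mu$ for a single fixed law $\mu$ (the $\P_{M^0}$-marginal of $R^f$), for $\P_{M^0}^Z$-almost every $z$. Setting $N := \{z : \mu_z \neq \mu\}$, we therefore have $\P_{M^0}^Z(N) = 0$. The domination hypothesis $\P_{M^0(i)} \ll \P_{M^0}$ passes to the $Z$-marginals, giving $\P_{M^0(i)}^Z \ll \P_{M^0}^Z$, and hence $\P_{M^0(i)}^Z(N) = 0$ as well. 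Thus $\mu_z = \mu$ also holds for $\P_{M^0(i)}^Z$-almost every $z$.

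I would then conclude by the tower property: for each $i\in\mathcal{I}$,
\begin{equation*}
\E_{M^0(i)}[\ell(R^f)] = \int \Bigl( \int \ell(r)\, \mu_z(dr) \Bigr)\, \P_{M^0(i)}^Z(dz) = \int \ell(r)\, \mu(dr) = \E_{M^0}[\ell(R^f)],
\end{equation*}
where the middle equality uses that $\mu_z = \mu$ on a set of full $\P_{M^0(i)}^Z$-measure and that $\int \ell\, d\mu$ is a constant independent of $z$. Taking the supremum over $i\in\mathcal{I}$ immediately yields the asserted equality.

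The difficulties here are measure-theoretic rather than conceptual, and I expect two of them to carry the weight. The crux is the first step: articulating precisely that the intervention leaves both the structural map $\psi$ and the law of $(U,\epsilon_X,\epsilon_Y)$ untouched while preserving the exogeneity $Z\ci(U,\epsilon_X,\epsilon_Y)$, so that $\mu_z$ is genuinely transportable between $M^0$ and $M^0(i)$. The second is the transfer of the null set $N$ from $\P_{M^0}^Z$ to $\P_{M^0(i)}^Z$, which is the sole place the domination assumption is used and cannot be dispensed with. Note that convexity of $\ell$ plays no role in \emph{this} equality (it is needed for the companion worst-case-optimality result); it enters here only implicitly, in that the affine lower bound on a convex $\ell$, together with domination, ensures the expectations are well-defined so that the tower property applies.
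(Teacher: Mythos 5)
Your proposal is correct and follows essentially the same route as the paper's proof: condition on $Z$, use that the intervention only replaces the law of the source node $Z$ (so the conditional law of $Y-f(X)$ given $Z$ is unchanged, with domination transferring the relevant almost-sure statements), and then invoke $Z \ci Y-f(X)$ under $\P_{M^0}$ to make the conditional expectation constant, giving the pointwise equality for each $i$ before taking the supremum. The only difference is presentational — you work with regular conditional distributions $\mu_z$ and an explicit null-set transfer, whereas the paper phrases the same argument more tersely via the tower property with conditional expectations — and your side remarks (nonemptiness of $\mathcal{I}$, convexity of $\ell$ being unused in this particular theorem) are accurate.
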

We consider this setting relevant as identifiablity (that is, $|\mathcal{F}_{\text{inv}}| = 1$) is not achievable in most modern machine learning applications.
The assumption that the intervened distributions are dominated by the observed distribution ensures that none of the interventions on $Z$ 
extend the support of $Z$. 
Generalizing to distributions that extend the support is only possible under additional extrapolation assumptions that require the functions $g^0$ and $f^0$ in the IV model $M^0$ to be partially identifiable \citep{Christiansen2020DG}. 
In this sense it is not possible to strengthen our result without making additional assumptions on the identifiablity of the IV model. 

Theorem~\ref{thm:invariant} motivates estimating a predictor based on
\begin{equation}
\label{eq:constrained_optim}
    \argmin_{f\in\mathcal{F}_{\text{inv}}}\, \E_{M^0}\big[\ell(Y-f(X))\big],
\end{equation}
where $\ell:\R\rightarrow\R$ is a convex loss function. Our proposed HSIC-X-pen estimator from \eqref{eq:penalized_HSIC_IV} for $\lambda$ approaching $0$ provides a flexible way of estimating the minimizer \eqref{eq:constrained_optim}. By Theorem~\ref{thm:invariant}, it is guaranteed to control the test error under any distributional shift generated by an intervention on $Z$ which does not extend the support. 
Moreover, 
we now prove
that for a sufficiently rich class of distribution shifts, a predictor solving \eqref{eq:constrained_optim} on the training data is worst-case optimal. To formalize our result, define the \emph{intervened subset} $S \subseteq\{1,\ldots,d\}$ consisting of all $X^j$ 
that are descendants of
$Z$ in the causal graph. When $|S|<d$, this may contain cases in which identifiability according to \eqref{eq:moment2} is impossible.\footnote{For example, if $Y=f^0(X_1, X_2)+U+\epsilon_Y$, $X_1=U$ and $X_2=Z+U$, then the causal function is not identifiable from \eqref{eq:moment2}. 
}
\begin{theorem}[Generalization to interventions on $Z$]
\label{thm:generalization}
Let $\ell:\R\rightarrow\R$ be a convex loss function and $\mathcal{I}$ be a set of interventions on $Z$ satisfying
for all $i\in\mathcal{I}$ that $\P_{M^0(i)}$ is dominated by $\P_{M^0}$, which itself is absolutely continuous with respect to a product measure. If there exists $i_*\in\mathcal{I}$ such that $X^S\ci U\mid X^{S^c}$ under $\P_{M^0(i_*)}$ and $\supp(\P_{M^0(i_*)}^{X})=\supp(\P_{M^0}^{X})$, then
 $$ \inf_{f\in\mathcal{F}_{\text{inv}\hspace{-0.5em}}}\E_{M^0\hspace{-0.2em}}\big[\ell(Y-f(X))\big]
     =\inf_{f\in \mathcal{F}}\sup_{i\in\mathcal{I}}\E_{M^0(i)\hspace{-0.3em}}\big[\ell(Y-f(X))\big].
    $$
\end{theorem}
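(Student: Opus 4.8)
The plan is to prove the two inequalities separately, putting essentially all the work into the direction $\le$ and exhibiting a single invariant predictor that is worst-case optimal. Throughout I write $N:=h(U,\epsilon_Y)$ for the response noise, so that $Y=f^0(X)+N$.

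The direction $\ge$ is immediate. Since $\mathcal{F}_{\text{inv}}\subseteq\mathcal{F}$ and, by Theorem~\ref{thm:invariant}, the observational risk of every $f\in\mathcal{F}_{\text{inv}}$ equals its worst-case risk, I would write
\begin{align*}
\inf_{f\in\mathcal{F}_{\text{inv}}}\E_{M^0}[\ell(Y-f(X))]
&=\inf_{f\in\mathcal{F}_{\text{inv}}}\sup_{i\in\mathcal{I}}\E_{M^0(i)}[\ell(Y-f(X))]\\
&\ge\inf_{f\in\mathcal{F}}\sup_{i\in\mathcal{I}}\E_{M^0(i)}[\ell(Y-f(X))].
\end{align*}

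For the direction $\le$, I would define $f'$ to be the pointwise $\ell$-optimal predictor of $Y$ given $X$ under the special intervention, $f'(x):=\argmin_{c}\E_{M^0(i_*)}[\ell(Y-c)\mid X=x]$; convexity of $\ell$ makes the conditional risk convex in $c$, so a measurably selectable minimizer exists. The crux is to show $f'\in\mathcal{F}_{\text{inv}}$. Here I would use $X^S\ci U\mid X^{S^c}$ under $\P_{M^0(i_*)}$ together with the exogeneity of $\epsilon_Y$: since $N$ is a function of $(U,\epsilon_Y)$ and $\epsilon_Y\ci(U,X)$, the conditional law of $N$ given $X=x$ under $i_*$ depends on $x$ only through $x^{S^c}$. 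Because $Y-c=f^0(x)+N-c$, minimizing over $c$ then yields the decomposition $f'(x)=f^0(x)+d^*(x^{S^c})$, where $d^*(x^{S^c}):=\argmin_d\E_{M^0(i_*)}[\ell(N-d)\mid X^{S^c}=x^{S^c}]$ depends only on $x^{S^c}$. Consequently the observational residual is $Y-f'(X)=N-d^*(X^{S^c})$, a deterministic function of $(U,\epsilon_X,\epsilon_Y)$ (recall $X^{S^c}$ does not depend on $Z$); as these noises are independent of $Z=\epsilon_Z$ under $\P_{M^0}$, we get $Z\ci Y-f'(X)$, i.e.\ $f'\in\mathcal{F}_{\text{inv}}$.

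It then remains to chain inequalities for an arbitrary $f\in\mathcal{F}$. First, because $i_*$ alters only the law of $Z$ while neither $N$ nor $X^{S^c}$ depends on $Z$, the joint law of $(N,X^{S^c})$, and hence of the residual $N-d^*(X^{S^c})$, is identical under $\P_{M^0}$ and $\P_{M^0(i_*)}$, so $\E_{M^0}[\ell(Y-f'(X))]=\E_{M^0(i_*)}[\ell(Y-f'(X))]$. Second, pointwise optimality of $f'$ under $i_*$ gives $\E_{M^0(i_*)}[\ell(Y-f'(X))]\le\E_{M^0(i_*)}[\ell(Y-f(X))]\le\sup_{i\in\mathcal{I}}\E_{M^0(i)}[\ell(Y-f(X))]$. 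Since $f'\in\mathcal{F}_{\text{inv}}$, the target left-hand infimum is at most $\E_{M^0}[\ell(Y-f'(X))]$; combining the chain and taking the infimum over $f\in\mathcal{F}$ on the right yields $\mathrm{LHS}\le\mathrm{RHS}$, completing the equality.

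The main obstacle is the invariance claim $f'\in\mathcal{F}_{\text{inv}}$ and its measure-theoretic underpinnings: the conditional-independence assumption is exactly what forces the $i_*$-optimal predictor to have a $Z$-independent residual. I expect the delicate points to be (i) existence of regular conditional distributions and of a measurable minimizer, for which the assumption that $\P_{M^0}$ is absolutely continuous with respect to a product measure is used, and (ii) the role of $\supp(\P_{M^0(i_*)}^{X})=\supp(\P_{M^0}^{X})$, which guarantees that $f'$, defined through conditioning under $i_*$, is defined $\P_{M^0}^{X}$-almost everywhere so that $\E_{M^0}[\ell(Y-f'(X))]$ is well-defined and the optimality comparison transfers between the two distributions. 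I would also record a mild integrability condition ensuring $f'\in\mathcal{F}=\mathcal{L}^2(\R^d,\P_{M^0}^{X})$.
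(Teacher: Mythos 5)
Your overall strategy mirrors the paper's: the direction $\geq$ follows from Theorem~\ref{thm:invariant} together with $\mathcal{F}_{\text{inv}}\subseteq\mathcal{F}$, and the substance lies in showing that the conditional independence $X^S\ci U\mid X^{S^c}$ under $i_*$ forces an (approximately) optimal predictor under $\P_{M^0(i_*)}$ to take the form $f^0(x)+(\text{function of }x^{S^c})$, whose residual is then independent of $Z$ under $\P_{M^0}$ because $Z\ci(X^{S^c},U,\epsilon_Y)$. That decomposition-and-invariance step, the transfer of the risk between $\P_{M^0}$ and $\P_{M^0(i_*)}$, and the role of the support condition all match the paper.

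The genuine gap is your construction of $f'$ as an exact pointwise minimizer $f'(x)=\argmin_{c}\E_{M^0(i_*)}[\ell(Y-c)\mid X=x]$. The theorem assumes only that $\ell$ is convex, and convexity does not give attainment: for a monotone convex loss such as $\ell(r)=e^{r}$, the map $c\mapsto\E[\ell(Y-c)\mid X=x]$ has infimum $0$ and no minimizer, so $f'$ does not exist and your argument collapses at its first step; your justification that convexity yields ``a measurably selectable minimizer'' is false as stated. Even when pointwise minimizers do exist, you additionally need $f'\in\mathcal{F}=\mathcal{L}^2(\R^d,\P^{X}_{M^0})$ for the inequality $\inf_{f\in\mathcal{F}_{\text{inv}}}\E_{M^0}\big[\ell(Y-f(X))\big]\leq\E_{M^0}\big[\ell(Y-f'(X))\big]$ to be legitimate, and your proposed remedy of ``recording a mild integrability condition'' changes the statement being proved. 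The paper's proof is engineered precisely to avoid both problems: it fixes $\varepsilon>0$, takes an $\varepsilon/2$-approximate \emph{global} minimizer $\psi\in\mathcal{F}$ of the risk under $i_*$ (which exists by the definition of the infimum), and then replaces $\psi-f^0$ by its conditional average given $X^{S^c}$; Jensen's inequality, using convexity of $\ell$ and the conditional independence, shows this averaging does not increase the risk, and conditional averaging automatically keeps the function square-integrable. If you swap your exact pointwise $\argmin$ for this approximate-minimizer-plus-conditional-Jensen device, your proof becomes correct and is then essentially the paper's.
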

The intervention $i_*$ can be called partially confounding-removing \citep[see also][]{Christiansen2020DG} in the sense that in the generated distribution $\P_{M^0(i_*)}$ the variables $X^S$ affected by the exogenous $Z$ are, conditioned on $X^{S^c}$, no longer confounded with $Y$ via $U$ (if $Z$ acts additively, this implies that, in general, $U$ does not act on both $Y$ and $X^S$).
As shown in the proof, this intervention results in the worst-case loss. An example of this type of intervention is given in Section~\ref{sec:sim_DG}. If $\ell(\cdot)=(\cdot)^2$ is the squared loss, the minimizer is attained at the conditional mean of $Y$ given $X$ under $\P_{M^0(i_*)}$, that is,
$f^0(x)+\E_{M^0(i_*)}\big[h(U,\epsilon^Y)\mid X=x\big]$ (see 
proof of Theorem~\ref{thm:generalization}).

\section{Experiments} \label{sec:exp}

\subsection{Simulation: Instrumental Variable Estimation}\label{sec:sim_exp}
We first evaluate the empirical performance of HSIC-X and HSIC-X-pen for estimating causal functions. To this end,
we use the following IV models in our experiments:
    \begin{equation}
        M(\alpha, \P_{\epsilon_Z}, f^0): \begin{cases}
        Z \coloneqq  \epsilon_{Z} \quad 
        U \coloneqq  \epsilon_{U} \\
        X \coloneqq  Z \epsilon_{X} + \alpha Z + U \\
        Y \coloneqq  f^0(X) - 4 U + \epsilon_Y,
        \end{cases} \label{eq:one_dim_IV_model}
\end{equation}
where $\epsilon_U, \epsilon_X, \epsilon_Y \simiid \mathcal{N}(0, 1)$, $\epsilon_Z \sim \P_{\epsilon_Z}$ 
and $\epsilon_Z \ci \epsilon_U, \epsilon_X, \epsilon_Y$. We consider different experiment settings by varying three parameters $\alpha, \P_{\epsilon_Z}$ and $f^0$. The parameter $\alpha$ adjusts how the instruments $Z$ influence the predictor $X$: the instruments $Z$ only change the variance of $X$ when $\alpha = 0$, while both the mean and the variance of $X$ are changed by $Z$ when $\alpha > 0$. 
For $\P_{\epsilon_Z}$,
we consider binary and Gaussian random variables. Lastly, 
for $f^0$,
we consider a linear function $f^0_{\text{lin}}(X) \coloneqq -2 X$ and a nonlinear function $f^0_{\text{nonlin}}(X) \coloneqq 1.5X - 0.2X^2 + \sum_{j=1}^{10} w_j e^{-(X - c_j)^2}$, where $c_j$ represents a partition of [-7, 7] in 10 equally spaced intervals and $w_1,\dots,w_{10} \simiid \mathcal{N}(0,2)$. We generate $1000$ observations from the IV model \eqref{eq:one_dim_IV_model} and evaluate the performance of our methods 
under different settings. In all the experiments, we use Adam as the optimizer with the learning rate set to $0.01$ and a batch-size of $256$. %

\begin{figure}[t]
\centerline{
\includegraphics[width=0.48\textwidth]{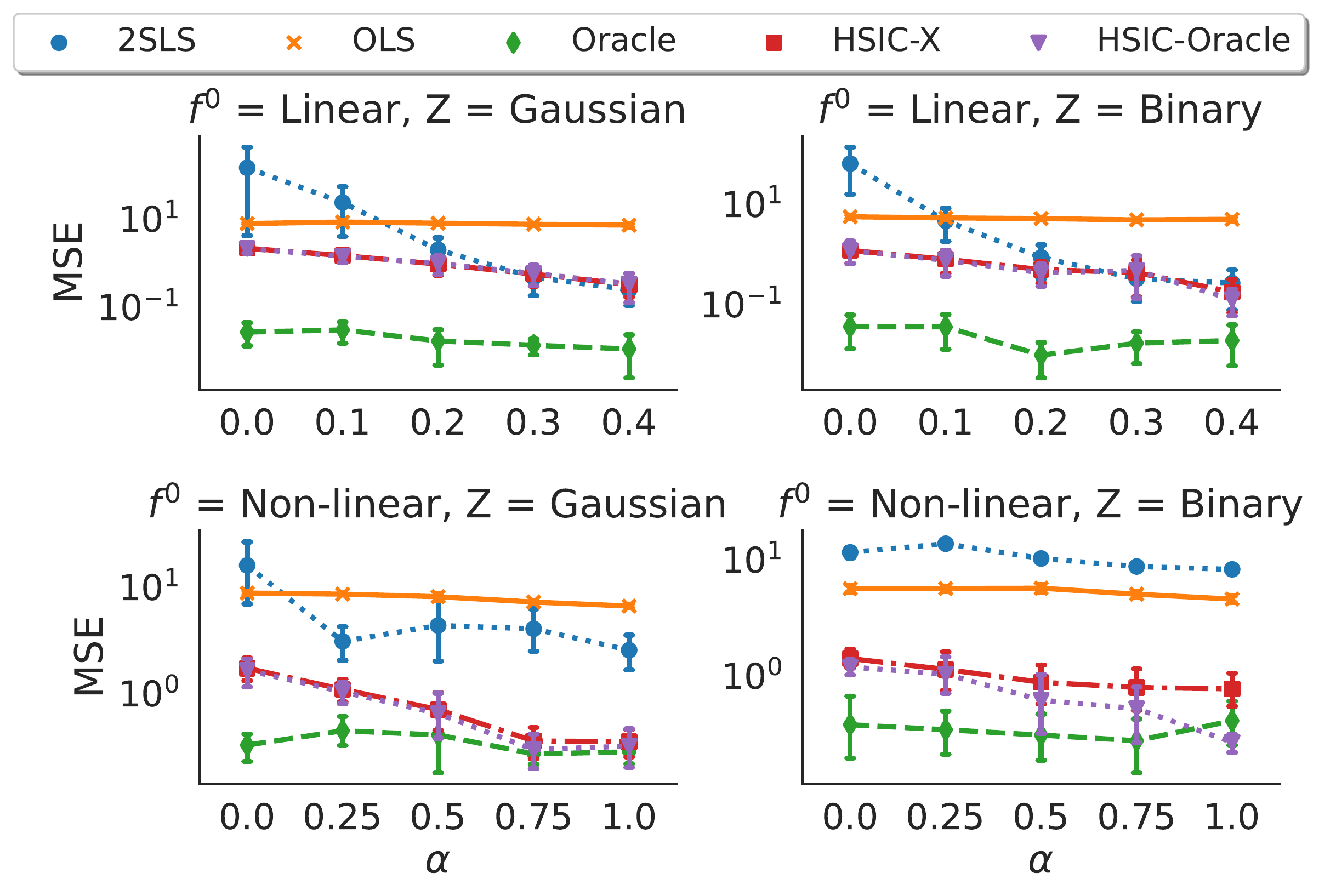}}
\caption{MSEs of different estimators when the correct basis functions are used. Each point represents an average over 10 simulations and the error bar indicates its 95\% confident interval. 2SLS is inconsistent and underperforms OLS when $\alpha = 0$, and for all $\alpha$ in the bottom right setting, while HSIC-X yields a substantial improvement over OLS in such settings. Under weak instruments (small $\alpha$), we observe an efficiency gain of HSIC-X over 2SLS. 
}
\label{fig:one_dim_basis}
\end{figure}

\paragraph{Known basis functions.}\label{sec:sim_iv_basis}
We first assume that the underlying function class containing 
the causal function is known; that is, we consider $\mathcal{F} \coloneqq \{ f(\cdot) = \phi(\cdot)^\top \theta \mid \theta \in \R^p \}$ with $p = 1$ and $\phi(x) = x$ in the linear case, and $p = 12$ and $\phi(x) = [x, x^2, e^{-(x-c_1)^2}, \dots, e^{-(x-c_{10})^2}]$ in the nonlinear case ($c_1,\dots,c_{10}$ are the same as described above). We compare the performance of HSIC-X
with the following baseline methods:
2SLS: this solves the moment restriction~\eqref{eq:moment1}; we use the feature map $\phi$ defined above as the function $\eta$ in the moment restriction.
OLS: least square regression of $Y$ on $X$ (using the basis functions $\phi$ or neural networks).
Oracle: least square regression of $Y$ on $X$ using non-confounded data (the confounder $U$ is removed from the assignment of $X$). 
HSIC-Oracle:
same as HSIC-X(-pen) but initialized at the true causal parameters; this is to investigate how much our method suffers from the non-convexity of the objective function. 
The last two methods 
serve as oracle benchmarks.

The performance of each method is measured by the integrated mean squared error (MSE) $\E[(\hat{f}(X) - f^0(X))^2]$ between the estimate $\hat{f}$ and the  causal function $f^0$, approximated using a test sample of size $10000$.
In all experiments, we report an average of the MSE values over 10 simulations. 

Figure~\ref{fig:one_dim_basis} reports the MSE as we increase the parameter $\alpha$. Our method shows a significant improvement over 2SLS in almost all settings. The improvement is especially prominent in the nonlinear-binary setting (see bottom right Figure~\ref{fig:one_dim_basis}), where 
the moment identifiability condition \eqref{eq:moment1b} is not satisfied (see Proposition~\ref{prop:stronger_identifability}). 
We still see an improvement gain even in the identifable cases (when $\alpha > 0$) which suggests a finite sample efficiency gain from using the independence restriction. Lastly, the performance of HSIC-X is on par with that of HSIC-Oracle, indicating that the optimization objective is reasonably well-behaved despite its non-convexity (only in the nonlinear/binary case, there is a slight deviation). We illustrate the estimated functions against the true causal function in Appendix~\ref{sec:app_exp_basis}.

\paragraph{Approximate functions.}
We now consider a more flexible function class to approximate the causal function $f^0_{\text{nonlin}}$ by using a neural network (NN).  For a fixed width and depth, $f^0_{\text{nonlin}}$ may not lie in the function class represented by the NN. Nonetheless, we expect our method to produce a reasonable estimate of the causal function.

We generate $1000$ observations from the IV model \eqref{eq:one_dim_IV_model} with 
$f^0_{\text{nonlin}}$. In addition to the OLS and Oracle baselines, we compare our method to DeepGMM \cite{Bennett2019} and DeepIV \cite{Hartford2017} using their publicly available implementations. To investigate the effect of the MSE regularization (see Section~\ref{sec:HSIC-X-Pen}), we add a variant of our method (HSIC-X-pen) where the MSE regularization is employed. A neural network with one hidden layer of size $64$ is used in our methods and all the baselines.

Figure~\ref{fig:exp_NN} outlines the simulation results. In short, both HSIC-X and HSIC-X-pen outperform DeepGMM, DeepIV and OLS baselines and approach the Oracle's performance as the instrument strength ($\alpha$) increases. We speculate that the inferior performance of DeepGMM and DeepIV may be 
explained by their sole reliance on the moment restriction \eqref{eq:moment1} as opposed to the full independence restriction \eqref{eq:moment2}. Lastly, HSIC-X-pen does not show a major improvement over HSIC-X.

\begin{figure}[t]
\centering
\includegraphics[width=0.48\textwidth]{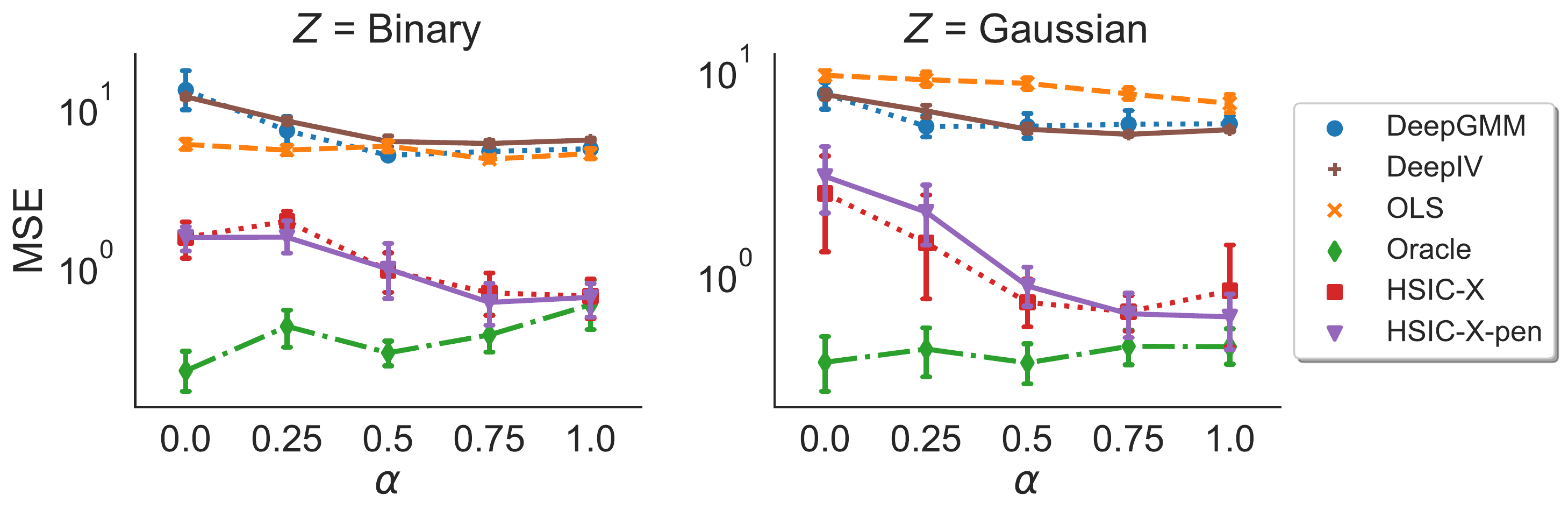}
\caption{MSEs of different estimators when the causal function is approximated by a neural network. HSIC-X shows a substantial gain over the baselines in all settings. HSIC-X-pen does not show a major improvement over HSIC-X.}
\label{fig:exp_NN}
\end{figure}

\paragraph{Multi-dimensional setting.}
We also investigate the effect of $X$'s and $Z$'s dimensionality on our
estimators. The exact experiment setup and detailed results are provided in Appendix~\ref{sec:app_exp_multidim}. The results suggest that for linear models with higher order effects of $Z$, HSIC-X outperforms 2SLS, in particular when the dimensions of $Z$ is strictly smaller than that of $X$.

\subsection{Simulation: Distribution Generalization}\label{sec:sim_DG}
To empirically verify the theoretical generalization guarantees from Section~\ref{sec:generalization}, we consider for $i\in\{0.5,1,\ldots,3.5,3.99\}$, the  collection of IV models
\begin{equation*}\label{eq:iv_model_DG}
    M(i): \begin{cases}
    Z \coloneqq  \epsilon_{i},  
    U_1 \coloneqq  \epsilon_{U_1},  
    U_2 \coloneqq  \epsilon_{U_2}, X_2 \coloneqq  U_2 + \epsilon_{X_2}\\
    X_1 \coloneqq  U_1 \mathds{1}(Z\leq 3.5) + 0.1 Z + 2Z\epsilon_{X_1}, \\
    Y \coloneqq  f^0(X_1, X_2) + U_1 + U_2,
    \end{cases}
\end{equation*}
where $\epsilon_{U_1},\epsilon_{U_2},\epsilon_{X_1}\epsilon_{X_2}$ are i.i.d.\ $\mathcal{N}(0, 1)$, $\epsilon_i=W_1\mathds{1}(K=0) + W_2\mathds{1}(K=1)$ with $K\sim\text{Ber}(i/4)$, $W_1\sim\text{Unif}(0,i)$ and $W_2\sim\text{Unif}(i, 4)$ and $\epsilon_i\ci ( \epsilon_{U_1},\epsilon_{U_2},\epsilon_{X_1}\epsilon_{X_2})$. The parameter $i$ determines how much the distribution of $Z$ with support $(0,4)$ is skewed towards the right. In particular, for $i\rightarrow 4$ the intervention removes the confounding effect of $U_1$, see Theorem~\ref{thm:generalization}. We use $M(0.5)$ to generate $3000$ observations from the training distribution and use it to fit HSIC-X-pen, OLS and Anchor regression (AR) \citep{Rothenhaeusler2018}, which comes with generalization guarantees, too,  but does not cover the higher-moments influence of $Z$ as in the SCM above. As in Section~\ref{sec:sim_exp}, we consider both linear and nonlinear functions for $f^0$, and employ the correct basis (Known Basis) and neural network (NN) function classes in our method and the baselines, see Appendix~\ref{sec:app_exp_DG} for details. We evaluate the fitted estimators on shifted test distributions and add a causal baseline (Causal) in which the true causal function is used as a predictor. This serves as an oracle baseline as, in this setting, the causal function is not identifiable.

The results are shown in Figure~\ref{fig:dis_gen}. Our method (HSIC-X-pen) and Causal are significantly more robust to the interventions compared to OLS and AR, with the causal oracle being the most invariant predictor as described in Theorem~\ref{thm:invariant}. However, the causal oracle is conservative (it ignores the information from the hidden confounder $U_2$ that is unaffected by the interventions and is helpful in predicting $Y$) and yields subpar performance when the interventions $i$ are small. Our method utilizes the invariant information from $U_2$ and yields the best trade-off among all candidates.
\begin{figure}[t]
\centering
\includegraphics[width=0.48\textwidth]{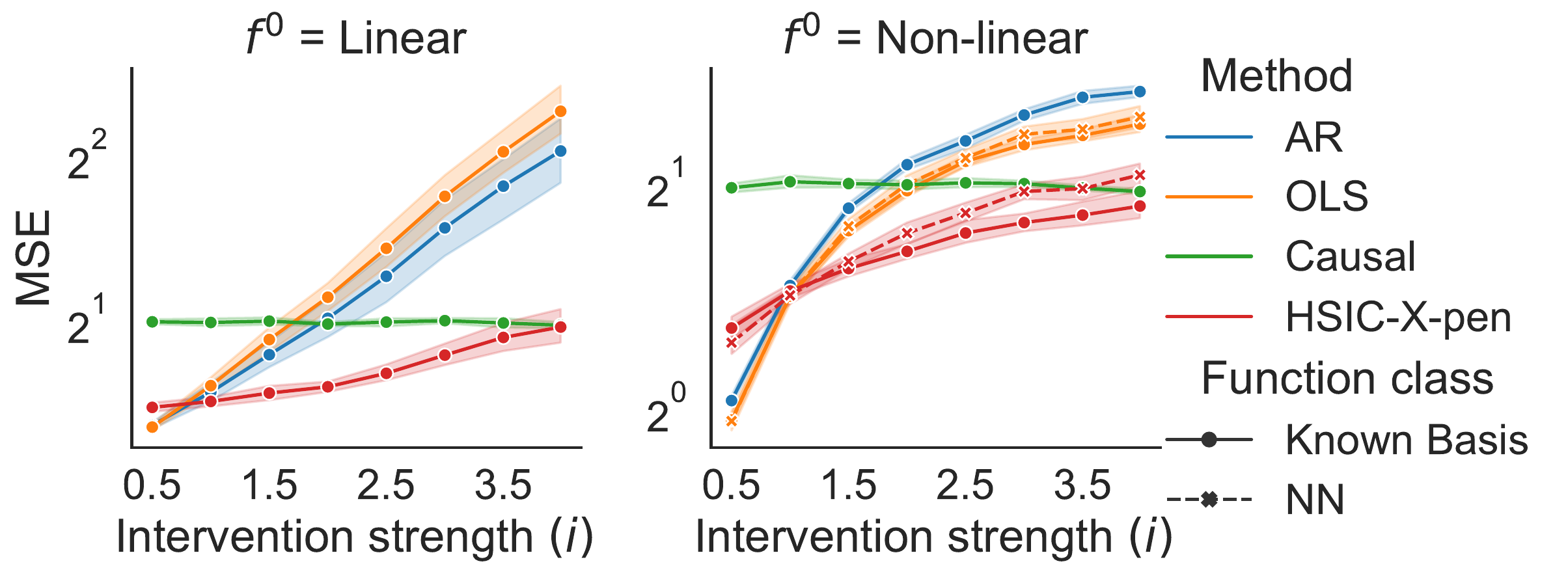}
\caption{Predictive performance of different predictors on shifted test distributions as the intervention strength increases. Causal is the most stable predictor but conservative, while OLS is markedly affected by the interventions. Due to the higher-moments influence from $Z$, HSIC-X outperforms AR in most cases and establishes the best trade-off between invariance and predictiveness.}
\label{fig:dis_gen}
\end{figure}

\subsection{Real-world Application}\label{sec:real_exp}
We apply HSIC-X to estimate the causal effect of education on earnings using 
$3010$ observations from the 1979 National Longitudinal Survey of Young Men
\citep{card1995}. The response variable $Y$ is the logarithm of wage, $X$ is years of education, and the (discrete) instrument $Z$ is geographic proximity to colleges (whether an individual grew up near a four-year college). \citet{card1995} also considered several conditioning variables $W$ including years of experience, race and geographic information
and studied a linear causal effect of education on earnings, so we use $\mathcal{F} = \{f \mid f(x) = \theta x\}$ and  apply the conditional IV estimator described in Appendix~\ref{sec:app_implmentation_civ}. 
In addition, we obtain a confidence region 
as described in Section~\ref{sec:confint}. We then compare our results with those by 2SLS, which was used in the original study of \citet{card1995}, and confidence intervals constructed by the Anderson-Rubin test \cite{AndersonRubin}.

Table~\ref{table:card} reports the point estimates and the confidence intervals at the 95\% confidence level. The point estimates from 2SLS and HSIC-X differ from the estimate from OLS suggesting a potential sizable unobserved confounding effect. Nonetheless, the difference between the OLS and 2SLS estimates is not statistically significant which was also observed in the original study (see \citet{card1995} and \citet{card2001}). We speculate that the linear effect of college proximity $Z$ on education $X$ may not be strong enough which leads to the imprecision of the 2SLS. On the other hand, the confidence region around HSIC-X does not contain the OLS. This suggests that taking into account full independence gains finite sample efficiency in this study.
\begin{table}[t]
\caption{Estimation results of the effect of education on earnings \citep{card1995}, including 95\% confidence sets. The independence restriction puts stronger constraints on the parameters and yields a smaller confidence set; e.g., it does not include the OLS solution, which  is not rejected by %
an Anderson-Rubin test.
}
\label{table:card}
\begin{center}
\begin{small}
\begin{sc}
\begin{tabular}{lcccr}
\toprule
Method & Point Estimate & Lower & Upper \\
\midrule
OLS    & 0.072 & 0.065 & 0.079 \\
2SLS & 0.142 & 0.050 & 0.273 \\
HSIC-X  & 0.160 & \textbf{0.097} & \textbf{0.208} \\
\bottomrule
\end{tabular}
\end{sc}
\end{small}
\end{center}
\vskip -0.1in
\end{table}

\section{Conclusion and Future Work}
Exploiting independence between exogenous variables and the residual terms of a response variable
can be beneficial for identifiability
of a causal function, the empirical performance of corresponding estimates,
and for constructing estimators that perform well even under interventions on such exogenous variables. 
We have proposed two estimators, HSIC-X and HSIC-X-pen that can be equipped with any 
machine learning regression method based on gradient descent. Empirical results on simulated and real data indicate that one may indeed benefit from considering independence restrictions in practice.

We believe it could be fruitful to 
construct fast approximate methods for inverting an independence test 
 and analyze distribution generalization when the support of $Z$ is extended.
 
\section*{Acknowledgments}
SS, LH, JP were supported by a research grant (18968) from VILLUM FONDEN. NP was supported by a research grant (0069071) from Novo Nordisk Fonden. We thank Nicola Gnecco and Nikolaj Thams for helpful discussions.

\bibliography{bibliographyJonas}
\bibliographystyle{icml2022}
\appendix 
\onecolumn

\section{Proofs} 
\label{sec:proofs}
\subsection{Proof of Proposition~\ref{prop:stronger_identifability}}
\begin{proof}
\begin{itemize}
    \item[(i)] 
We first show that if there exists $k$ and $\eta$, such that condition \eqref{eq:moment1b} holds, then condition \eqref{eq:moment2c} holds. We do so by contraposition, so assume that there exists a $\tau \neq 0$ violating condition \eqref{eq:moment2c}. The following then hold for all $k \in \mathbb{N}$ and all $\eta: \R^r \rightarrow \R^k$. First, $\Cov [ \eta(Z), h(U,\epsilon_Y) + \phi(X)^\top \tau ] = 0$. Second, as $Z \ci h(U,\epsilon_Y)$ it follows that $\Cov [ \eta(Z), h(U,\epsilon_Y)]=0$. Third, by combining the previous two results it follows that $\Cov [ \eta(Z), \phi(X)^\top \tau ] = 0$, which is a violation of condition \eqref{eq:moment1b}.

\item[(ii)] This statement is proven by (iii) (a), for example.
\item[(iii)] We now consider the two types of examples (a), (b) and (c).
\paragraph{(a)}

In the example SCM, we have $\E[X|Z]=Z \E[U] + \E[\epsilon_X]=0$ and therefore for all $k \in \mathbb{N}$ and functions $\eta: \R^r \rightarrow \R^k$, $\E[\eta(Z) X] = 0$.
As a result, the identifiability condition \eqref{eq:moment1b} does not hold here. Specifically, any function of the form $\theta X, \theta \in \R$ satisfies the moment restriction, so we cannot identify the causal function $f^0(x) = x$.

Consider now the independence based identifiability condition \eqref{eq:moment2c}:
\[
Z \ci U + \epsilon_Y + \tau X \, \implies \, \tau=0.
\]

Plugging in the structural equation for $X$ this is equivalent to $K_{\tau} \coloneqq U + \tau ZU + \tau \epsilon_X + \epsilon_Y \not\ci Z$ for all $\tau \neq 0$. We now show that this is true by showing that for all $\tau \neq 0, \mathbb{E}[Z^2K^2_{\tau}] \neq \mathbb{E}[Z^2]\mathbb{E}[K^2_{\tau}]$. First,
\begin{align*}
\mathbb{E}[Z^2K^2_{\tau}] 
& =\mathbb{E}[Z^2(U + \tau ZU + \tau \epsilon_X + \epsilon_Y)^2]\\
& =\mathbb{E}[Z^2U^2] + \tau^2 \mathbb{E}[Z^4U^2] + \tau^2 \mathbb{E}[Z^2\epsilon^2_X] + \mathbb{E}[Z^2\epsilon^2_Y]\\
& =2 + 4\tau^2. 
\end{align*}
Second,
\begin{align*}
\mathbb{E}[Z^2] \mathbb{E}[K^2_{\tau}] 
& =\mathbb{E}[Z^2] \mathbb{E}[(U + \tau ZU + \tau \epsilon_X + \epsilon_Y)^2]\\
& =\mathbb{E}[Z^2](\mathbb{E}[U^2] + \tau^2 \mathbb{E}[Z^2U^2] + \tau^2 \mathbb{E}[\epsilon_X^2] + \mathbb{E}[\epsilon_Y^2])\\
& =2 + 2 \tau^2. 
\end{align*}
Therefore, condition \eqref{eq:moment2c} is fulfilled, meaning we can identify the causal function $f^0(x)=x$.

    \paragraph{(b)} We first give an argument why identifiability using condition~\eqref{eq:moment1b} is impossible. For a discrete instrument $Z$, the number of almost surely linearly independent functions\footnote{Here, we say that a collection of functions $l_1,\dots,l_k$ is almost surely linearly independent if for all vectors $\alpha \neq 0$, $\P\left(\sum_{i=1}^k \alpha_i l_i(Z) = 0\right)=0$.} of $Z$ is bounded by the cardinality of the support of $Z$; therefore, if the number of basis functions is larger than the support of $Z$, there is no  $k\in\mathbb{N}$ and $\eta:\R^r\rightarrow\R^k$ such that condition~\eqref{eq:moment1b}  is satisfied.

We now construct an explicit example as follows: First, we show for a class of SCMs that under certain assumptions condition \eqref{eq:moment1b} does not hold while \eqref{eq:moment2c} does. Second, we give an explicit SCM with a binomially distributed instrument that fulfills these conditions.
Fix $k,p \in \mathbb{N}$ such that $p>k$ and consider the SCM
\begin{align*}
    Z&\coloneqq \epsilon_Z\\
    U&\coloneqq \epsilon_{U}\\
    X&\coloneqq Z  + U  + \epsilon_X \\
    Y&\coloneqq \sum_{i=1}^{p} \alpha_i X^i + U + \epsilon_Y,
  \end{align*}
  where $\epsilon_Z,\epsilon_U,\epsilon_X$ and $\epsilon_Y$ are jointly independent, real-valued errors $\epsilon_Z \sim \mathrm{Unif}(\{0,\dots,k-1\})$
  and $ \alpha_i \in \mathbb{R}\setminus \{0\}$ for all $i\in \{1,\dots,p\}$.
Assume that the random variables $\{\epsilon_X^iU^j | 1 \leq i+j \leq p\}$ are almost surely linearly independent (see above).
  We are interested in estimating the causal function $f^0(x)= \sum_{i=1}^{p} \alpha_i x^i$ with the basis functions $\phi(x)=(x^1,\dots,x^p)$.

  Consider first the moment identifiability condition \eqref{eq:moment1b}. Because $Z$ is discrete, the set of functions $\{\mathbf{1}_{\{Z=i\}}, i \in \{0,\dots,k-1\}\}$ form a linear basis for all functions from
  $\supp(Z)$ to $\mathbb{R}$. As a result, for all functions  $\eta= (\eta_1, \dots, \eta_q): \R \rightarrow \R^q$, with $q>k$, condition~\eqref{eq:moment1b} cannot hold. 
This implies that while condition \eqref{eq:moment1b} may hold if $p \leq k$, with for example $\eta(Z)=(\mathbf{1}_{\{Z=0\}},\dots,\mathbf{1}_{\{Z=k-1\}})$, it cannot hold if $p > k$, irrespective of the choice of $\eta$.
  
 Consider now the independence identifiability condition \eqref{eq:moment2c}:
\[
\sum_{i=1}^{p} \tau_i \alpha_i X^i + h(U, \epsilon_Y) \ci Z \quad \implies \tau_1=\dots=\tau_p=0.
\]  
Because the support of $Z$ is finite, this is equivalent to the statement that the distributions of the random variables
\[
\sum_{i=1}^{p} \tau_i \alpha_i (j + U + \epsilon_X)^i + U + \epsilon_Y,
\]
are the same for all $j \in \{ 0,\dots,k-1 \}$. Consider the cases $j=0$ and $j=1$. For $\sum_{i=1}^{p} \tau_i \alpha_i (U + \epsilon_X)^i + U + \epsilon_Y$ and 
$\sum_{i=1}^{p} \tau_i \alpha_i (1 + U + \epsilon_X)^i + U + \epsilon_Y$ to have the same distribution, the coefficients in front of the $\epsilon_X^i$ (when collecting all terms)  for all $i\in \{1,\dots,p\}$ must be equal 
in both random variables. Consider first the $\epsilon_X^{p-1}$ term. The corresponding two coefficients are $\tau_{p-1} \alpha_{p-1}$ and $\tau_{p-1} \alpha_{p-1} + p \tau_{p} \alpha_{p}$. These two coefficients are only equal if $\tau_{p}=0$. By iterating this argument we obtain that $\tau_2=\dots=\tau_p=0$. 
This implies that $\tau_{1} \alpha_{1}(U + \epsilon_X) + U + \epsilon_Y$ and $\tau_{1} \alpha_{1}(1 + U + \epsilon_X) + U + \epsilon_Y$ have the same distribution. Thus, $\tau_1=0$ 
and therefore condition~\eqref{eq:moment2c} holds.

We now give an explicit example, where $k<p$ and where the random variables $\{\epsilon_X^iU^j | 1 \leq i+j \leq p\}$ are almost surely linearly independent. Let $\epsilon_U,\epsilon_Z,\epsilon_X$ and $\epsilon_Y$  standard normal, $Z\sim \mathrm{Bernoulli}(0.5)$ and $p=3$. Here, $k=2 < p$ and we will now show that the set of random variables $\{\epsilon_X^iU^j|1\leq i+j\leq 3\}$ is almost sure linearly independent. 
Let $\tau_{ij},1\leq i+j \leq 3$
be coefficients
such that 
$K_{\tau} \coloneqq \sum_{1 \leq i+j \leq 3} \tau_{ij} \epsilon_X^iU^j = 0$ almost surely. In particular, this implies that $\EX[K_{\tau} \epsilon_X^iU^j]=0$ for all $i,j \leq 0$. 
Using 
$\mathbb{E}[\epsilon_X^iU^j \epsilon_X^kU^l]=0$ 
if either $i+k$ or $j+l$ are odd
(which follows as $\epsilon_X$ and $U$ are mean zero Gaussian and independent),
we obtain the following nine equations.
\begin{align*}
    &\mathbb{E}[K_{\tau}U]=
    \tau_{01}+3\tau_{03}+\tau_{21} =0\\
    &\mathbb{E}[K_{\tau}U^2]=
    3\tau_{02}+\tau_{20} =0 \\
    &\mathbb{E}[K_{\tau}U^3]=
    3\tau_{01}+15\tau_{03}+3\tau_{21} =0 \\
    &\mathbb{E}[K_{\tau}\epsilon_X]=
    \tau_{10} + \tau_{12} + 3\tau_{30} =0\\
    &\mathbb{E}[K_{\tau}\epsilon_XU]=
    \tau_{11} =0\\
    &\mathbb{E}[K_{\tau}\epsilon_XU^2]=
    \tau_{10} + 3\tau_{12} + 3\tau_{30} =0\\
    &\mathbb{E}[K_{\tau}\epsilon_X^2]=
    \tau_{02} + 3 \tau_{20} =0\\
    &\mathbb{E}[K_{\tau}\epsilon_X^2U]=
    \tau_{01} + 3\tau_{03} + 3\tau_{21} =0\\
    &\mathbb{E}[K_{\tau}\epsilon_X^3]=
   3\tau_{10} + 3\tau_{12} + 15\tau_{30} =0. 
\end{align*}
We can write this as a linear system 
$A\tau=0$
with corresponding $9\cross 9$ matrix $A$. As $A$ is invertible, $\tau = 0$ follows.

\paragraph{(c)}
Consider the SCM
\begin{align*}
    &Z := \epsilon_Z \\
    &U := \epsilon_U\\
    &X := 2ZU - U + \epsilon_X\\
    &Y := X + U + \epsilon_Y.
\end{align*}
where $\epsilon_Z,\epsilon_U,\epsilon_X$  and $\epsilon_Y$ are jointly independent, $\epsilon_Z \sim  \mathrm{Bernoulli}(0.5)$ and the remaining errors are standard normal. Consider the basis function $\phi(x)=x$. Here,
\begin{eqnarray*}
K_{\tau} \coloneqq h(U,\epsilon_Y) + \tau X=
     \begin{cases}
     (1+\tau) U +  \tau \epsilon_X + \epsilon_Y & \quad \quad \textrm{if }Z=1  \\
     (1-\tau) U +  \tau \epsilon_X + \epsilon_Y & \quad \quad \textrm{if }Z=0  \\
     \end{cases}\
\end{eqnarray*}
and therefore $\mathbb{E}[K^2_{\tau}|Z=1]=2\tau^2+2\tau+2$ and $\mathbb{E}[K^2_{\tau}|Z=0]=2\tau^2-\tau+2$. It follows that condition~\eqref{eq:moment2c} holds. On the other hand, $\tau X|(Z=1)=\tau(U + \epsilon_X)$ and $\tau X|(Z=0)=\tau(-U + \epsilon_X)$ and second, the distribution of $U$ is symmetric around 0. Therefore, $Z \ci \tau X$ for all $\tau \in \mathbb{R}$. We can therefore identify the causal function with the independence restriction, even though $Z$ is independent of $X$.

 \end{itemize}

\end{proof}

\subsection{Proof of Proposition~\ref{prop:no-simplification}}
\begin{proof}
This is a proof by example. We have already constructed an SCM, such that condition~\eqref{eq:moment2c} holds but condition~\eqref{eq:moment2d} does not in the proof of statement (iii) (c) of Proposition \ref{prop:stronger_identifability}.

We now construct a SCM, such that 
condition~\eqref{eq:moment2d} holds but 
condition~\eqref{eq:moment2c} does not. 
Consider the SCM
\begin{align*}
    &Z := \epsilon_Z\\
    &U := \epsilon_U\\
    &X := 2ZU + \epsilon_X\\
    &Y := X - U  + \epsilon_Y.
\end{align*}
where $\epsilon_Z,\epsilon_U,\epsilon_X$  and $\epsilon_Y$ are jointly independent, $\epsilon_Z \sim  \mathrm{Bernoulli}(0.5)$ and the remaining errors are standard normal. Consider the basis functions $\phi(x)=(x)$. Here, 
\begin{eqnarray*}
h(U,\epsilon_Y) + X=
     \begin{cases}
     U + \epsilon_X + \epsilon_Y & \quad \quad \textrm{if }Z=1  \\
     - U + \epsilon_X + \epsilon_Y & \quad \quad \textrm{if }Z=0  \\
     \end{cases}
\end{eqnarray*}
and therefore condition~\eqref{eq:moment2c} does not hold for $\tau=1$. On the other hand,
\begin{eqnarray*}
\tau X =
     \begin{cases}
     \tau (2U + \epsilon_X) & \quad \quad \textrm{if }Z=1  \\
     \tau \epsilon_X & \quad \quad \textrm{if }Z=0  \\
     \end{cases}
\end{eqnarray*}
and therefore $\mathbb{E}[(\tau X)^2|Z=1]=5\tau^2$ while $\mathbb{E}[(\tau X)^2|Z=0]=\tau^2$. It follows that condition~\eqref{eq:moment2d} holds. 
\end{proof}

\subsection{Proof of Theorem~\ref{thm:consistency}}
\begin{proof}
Let $H(\mathbb{P}_{M^0},\theta)$ $\coloneqq \mathrm{HSIC}((Y -\phi(X)^\top \theta,Z);k_{R},k_{Z})$ with non-negative bounded and Lipschitz continuous kernels $k_R$ and $k_Z$ with bounded and continuous derivatives.\footnote{From now on, we do not explicitly write down the kernels and write $H(\theta)$ rather than $H(\mathbb{P}_{M^0},\theta)$ to ease notation} Consider an i.i.d.\ data set $\mathcal{D}_n=(X_i,Y_i,Z_i)_{i=1}^n$, such that that each triple $(X_i,Y_i,Z_i)$ follows the distribution of $M^0$. Let $\hat{H}_{n}(\mathcal{D}_n,\theta)\coloneqq\widehat{\HSIC}((R_i^{\theta}, Z_i)_{i=1}^n)$, with $R_i^{\theta}=Y_i - \phi(X_i)^\top \theta$.

Then the following three results hold. First, as the kernels $k_R$ and $k_L$ are by assumption non-negative and bounded it follows by Corollary 15 in \citet{Mooij2016jmlr} that for all $\theta$ and all $\epsilon>0$, $\lim_{n\to \infty} \mathbb{P}_{M^0}(|\hat{H}_{n}(\mathcal{D}_n,\theta) - H(\theta)|>\epsilon)=0$.

Second, it follows by Lemma 16 in \citet{Mooij2016jmlr} that for all $\theta_1,\theta_2 \in \Theta$ and all $n \geq 2$ 
\begin{align*}
\hat{H}_n(\mathcal{D}_n,\theta_1) - \hat{H}_n(\mathcal{D}_n,\theta_2)
&= \widehat{\HSIC}((R_i^{\theta_1}, Z_i)_{i=1}^n)-\widehat{\HSIC}((R_i^{\theta_2}, Z_i)_{i=1}^n) \\
&\leq \frac{32\lambda C}{\sqrt{n}}\norm{R^{\theta_1}-R^{\theta_2}} \\
&= \frac{32\lambda C}{\sqrt{n}}\norm{\phi(X)^\top (\theta_1 - \theta_2)} \\
&\leq \frac{32\lambda C}{\sqrt{n}}\norm{\phi(X)^\top} \norm{(\theta_1 - \theta_2)} \\
&=  \frac{32\lambda C}{\sqrt{n}} \left(\sum_{i=1}^n \norm{\phi(X_i)}^2\right)^{0.5} \norm{(\theta_1 - \theta_2)}\\
&\leq 32\lambda C M \norm{(\theta_1 - \theta_2)},
\end{align*}
where $R^\theta=(Y_i - \phi(X_i)^\top \theta)_{i=1}^n$, $\phi(X)=(\phi(X_i))_{i=1}^n$, $\lambda$ is the Lipschitz constant for $k_R$, $C$ is an upper bound for $k_Z$ and $M$ is an upper bound for $\norm{\phi(\cdot)}^2$ 
which exists by the assumption that $\phi(\cdot)$ is bounded. 

Third, for example by \citet{Pfister2017jrssb} Proposition~2.5, the population HSIC can be written as 
\begin{align*}
    H(\theta) 
    &= \mathrm{HSIC}(Y -\phi(X)^\top \theta,Z) \\
    &= \mathbb{E}[k_R(R_1^{\theta_1},R_2^{\theta_2})k_Z(Z_1,Z_2)]  
    + \mathbb{E}[k_R(R_1^{\theta_1},R_2^{\theta_2})] \mathbb{E}[k_Z(Z_1,Z_2)]  
    -2 \mathbb{E}[k_R(R_1^{\theta_1},R_2^{\theta_2})k_Z(Z_1,Z_3)],
\end{align*}
with $R_1^{\theta_1}$ and $R_2^{\theta_2}$ as well as $Z_1$, $Z_2$ and $Z_3$ being i.i.d.\ copies of $R^{\theta_1}$ and $Z$, respectively.
The function $R^\theta : \theta \mapsto Y-\phi(X)^\top \theta$ is (surely) continuously differentiable, as is $k_R$ by assumption. Further, $k_R$ and $k_Z$ have bounded derivative. We can therefore conclude by dominated convergence and the chain rule that $H(\theta)$ is continuously differentiable. Therefore, $H(\theta)$ is Lipschitz on $\Theta$, since $\Theta$ is compact.

Jointly, these three results (which correspond to (i), (ii), and (iii) from the statement of the theorem) allow us to apply Corollary 2.2 from \citet{newey1991uniform} and conclude that for all $\varepsilon >0$,
\begin{equation}
    \label{eq:uniform_consistency}
    \lim_{n\to \infty} \mathbb{P}_{M^0}(\max_{\theta \in \Theta}|\hat{H}_n(\mathcal{D}_n,\theta) - H(\theta)|>\varepsilon)=0.
\end{equation}
Here and in the remainder of the proof we use that $\theta \mapsto \hat{H}_n(\mathcal{D}_n,\theta)$ and $\theta \mapsto H(\theta)$ are (surely) continuous functions that map to $\R$. Restricted to any compact set, they therefore (surely) attain a maximum and a minimum.

Consider now any sequence of estimators $\hat{\theta}_{n}^0 \in \mathrm{arg}\,\min_{\theta \in \Theta} \hat{H}_{n}(\mathcal{D}_n,\theta)$. By the assumption that condition \eqref{eq:moment2c} holds, $\theta^0$ is the unique minimizer of the continuous function $H(\cdot)$ on the compact set $\Theta$. Therefore, for all $\varepsilon>0$, there exists a $\zeta(\varepsilon)$ such that for all $\theta \notin B_{\varepsilon}(\theta^0)=\{\theta:||\theta-\theta^0|| < \varepsilon\}$ it holds that $H(\theta)-H(\theta^0) > \zeta(\varepsilon)$. Note also that $\Theta \setminus B_{\varepsilon}$ is compact.
Fix $\varepsilon>0$ and $\delta>0$, then by \eqref{eq:uniform_consistency} there exists an $n'\in\mathbb{N}$, such that for all $n \geq n'$
\begin{equation}
\label{eq:upperbound}
    \mathbb{P}_{M^0}(\mathrm{max}_{\theta \in \Theta} |\hat{H}_{n}(\mathcal{D}_n,\theta)-H(\theta)|> \frac{1}{2}\zeta(\varepsilon)) < \frac{1}{2}\delta.
\end{equation}
Then, for all $n\geq n'$ it holds that
\begin{align*}
\mathbb{P}_{M^0}(||\hat{\theta}_n^0 - \theta^0|| > \varepsilon) 
&\leq \mathbb{P}_{M^0}(\mathrm{min}_{\theta \in \Theta \setminus B_{\varepsilon}(\theta^0)}( \hat{H}_{n}(\mathcal{D}_n,\theta)- \hat{H}_n(\mathcal{D}_n,\theta^0)) \leq 0) \\
&\leq \mathbb{P}_{M^0}(\mathrm{max}_{\theta \in \Theta \setminus B_{\varepsilon}(\theta^0)}|\hat{H}_{n}(\mathcal{D}_n,\theta) - H(\theta)| + |\hat{H}_n(\mathcal{D}_n,\theta^0)-H(\theta^0)|>\zeta(\varepsilon)) \\
&\leq \mathbb{P}_{M^0}(\{\mathrm{max}_{\theta \in \Theta \setminus B_{\varepsilon}(\theta^0)}|\hat{H}(\mathcal{D}_n,\theta) - H(\theta)| > \frac{1}{2}\zeta(\varepsilon)\}  \{|\hat{H}_n(\mathcal{D}_n,\theta^0)-H(\theta)|> \frac{1}{2}\zeta(\varepsilon)\})\\
&< \delta.
\end{align*}
Here, we used the following four arguments:
\begin{itemize}
    \item Firstly, the event $\{||\hat{\theta}_n^0 - \theta^0|| > \varepsilon\}$ can only occur if there exists a $\theta \in \Theta\setminus B_{\varepsilon}(\theta^0)$ such that the event $\{\hat{H}_{n}(\mathcal{D}_n,\theta)- \hat{H}_n(\mathcal{D}_n,\theta^0) \leq 0\}$ occurs.
    \item Secondly, $\zeta(\varepsilon)$ is defined such that for all $\theta \in \Theta\setminus B_{\varepsilon}(\theta^0)$ it holds that $H(\theta) - H(\theta^0) > \zeta(\varepsilon)$. Therefore, if there exist a $\theta$ such that the event $\{\hat{H}_{n}(\mathcal{D}_n,\theta)- \hat{H}_n(\mathcal{D}_n,\theta^0) \leq 0\}$ occurs, then there must exist a $\theta$ such that the event $\{|\hat{H}_{n}(\mathcal{D}_n,\theta) - H(\theta)| + |\hat{H}_n(\mathcal{D}_n,\theta^0)-H(\theta^0)|>\zeta(\varepsilon)\}$ occurs.
    \item Thirdly, for $\{\mathrm{max}_{\theta \in \Theta \setminus B_{\varepsilon}(\theta^0)}|\hat{H}_{n}(\mathcal{D}_n,\theta) - H(\theta)| + |\hat{H}_n(\mathcal{D}_n,\theta^0)-H(\theta^0)|>\zeta(\varepsilon)\}$ to occur, either $\{\mathrm{max}_{\theta \in \Theta \setminus B_{\varepsilon}(\theta^0)}|\hat{H}_{n}(\mathcal{D}_n,\theta) - H(\theta)| > \frac{1}{2} \zeta(\varepsilon)\}$ or $\{|\hat{H}_n(\mathcal{D}_n,\theta^0)-H(\theta^0)|>\frac{1}{2}\zeta(\varepsilon)\}$ must occur.
    \item Fourthly, we use the union bound together with \eqref{eq:upperbound} for the last inequality.
\end{itemize}
We can therefore conclude that $\lim_{n \to \infty} \mathbb{P}_{M^0}(||\hat{\theta}_{n}^0 - \theta^0||>\varepsilon)=0.$ 

Finally, consider our estimator $\hat{f}(\cdot)=\phi(\cdot)^\top \hat{\theta}_{n}^0$ for the causal function $f^0$. 
By the assumption that $\phi$ is bounded, we get that
\begin{equation*}
    ||\hat{f}-f^0||_{\infty}=\sup_{x\in\R^d}|\phi(x)^{\top}(\hat{\theta}^0_n-\theta^0)|\leq \sup_{x\in\R^d}||\phi(x)||\cdot ||\hat{\theta}^0_n-\theta^0||\leq M ||\hat{\theta}^0_n-\theta^0||.
\end{equation*}
Hence we conclude that $||\hat{f}-f^0||_{\infty} \xrightarrow{\mathbb{P}_{M^0}} 0$. 

Our second claim follows by the fact that under the conditions laid out, we can directly apply Corollary 2.2 from \citet{newey1991uniform} to obtain uniform convergence in probability and then argue as in the case for $\widehat{\HSIC}$.

\end{proof}

\subsection{Proof of Theorem~\ref{thm:invariant}}

\begin{proof}
Let $f_{\diamond}\in \mathcal{F}_{\text{inv}}$ and $i\in\mathcal{I}$. Let $\mathcal{G}$ be the directed graph induced by the SCM $M^0$. Then, since $Z$ is by construction a source node in $\mathcal{G}$ and $i$ an intervention on $Z$ satisfying that $\P_{M^0(i)}$ is dominated by $\P_{M^0}$, it holds that
\begin{align}
    \E_{M^0(i)}\big[\ell(Y-f_{\diamond}(X))\big]
    &=\E_{M^0(i)}\Big[\E_{M^0(i)}\big[\ell(Y-f_{\diamond}(X))  \mid Z\big]\Big] \nonumber \\
    &=\E_{M^0(i)}\Big[\E_{M^0}\big[\ell(Y-f_{\diamond}(X))\mid Z\big]\Big]. 
    \label{eq:i_does_not_matter1}
\end{align}
Now, since $f_{\diamond}\in\mathcal{F}_{\text{inv}}$ it holds that $Z\ci Y-f_{\diamond}(X)$ under $\P_{M^0}$, which implies that
\begin{equation}
    \label{eq:i_does_not_matter2}
    \E_{M^0(i)}\Big[\E_{M^0}\big[\ell(Y-f_{\diamond}(X))\mid Z\big]\Big]
    =\E_{M^0(i)}\Big[\E_{M^0}\big[\ell(Y-f_{\diamond}(X))\big]\Big]
    =\E_{M^0}\big[\ell(Y-f_{\diamond}(X))\big]\Big].
\end{equation}
Since $i\in\mathcal{I}$ was arbitrary, combining \eqref{eq:i_does_not_matter1} and \eqref{eq:i_does_not_matter2} directly implies
\begin{equation*}
    \E_{M^0}\big[\ell(Y-f(X))\big]=\sup_{i\in\mathcal{I}}\E_{M^0(i)}\big[\ell(Y-f(X))\big],
\end{equation*}
which completes the proof of Theorem~\ref{thm:invariant}.
\end{proof}

\subsection{Proof of Theorem~\ref{thm:generalization}}

\begin{proof}
We consider two optimization problems: (A) Minimize $\E_{M^0}\big[\ell(Y-f_{\diamond}(X))\big]$ over all $f_{\diamond}\in\mathcal{F}_{\text{inv}}$ and (B) minimize $\sup_{i\in\mathcal{I}}\E_{M^0(i)}\big[\ell(Y-f_{\diamond}(X))\big]$ over all $f_{\diamond}\in\mathcal{F}$. To prove the result, we fix an arbitrary $\varepsilon>0$ and find a function $\overline{\phi}\in\mathcal{F}_{\text{inv}}$ that satisfies
\begin{equation}
    \label{eq:eps_close_A}
    \E_{M^0}\big[\ell(Y-\overline{\phi}(X))\big]\leq \inf_{f_{\diamond}\in\mathcal{F_{\text{inv}}}}\E_{M^0}\big[\ell(Y-f_{\diamond}(X))\big] + \epsilon
\end{equation}
and
\begin{equation}
    \label{eq:eps_close_B}
    \sup_{i\in\mathcal{I}}\E_{M^0(i)}\big[\ell(Y-\overline{\phi}(X))\big]\leq \inf_{f_{\diamond}\in\mathcal{F}}\sup_{i\in\mathcal{I}}\E_{M^0(i)}\big[\ell(Y-f_{\diamond}(X))\big] + \epsilon.
\end{equation}
Since by \eqref{eq:i_does_not_matter1} and \eqref{eq:i_does_not_matter2} in the proof of Theorem~\ref{thm:invariant} it holds that $\E_{M^0}\big[\ell(Y-\overline{\phi}(X))\big]= \sup_{i\in\mathcal{I}}\E_{M^0(i)}\big[\ell(Y-\overline{\phi}(X))\big]$, the result we wish to prove follows immediately. The proof now proceeds in two steps:
\begin{enumerate}
    \item[(1)] We construct the function $\overline{\phi}$ using the intervention $i_*$ from the statement.
    \item[(2)] We use that $\overline{\phi}\in\mathcal{F}_{\inv}$ to conclude the proof.
\end{enumerate}

\textbf{Step (1):} Let $i_*\in\mathcal{I}$ be the intervention from the statement of the theorem. For all $f_{\diamond}\in \mathcal{F}$, it holds that
\begin{equation*}
    \sup_{i\in\mathcal{I}}\E_{M^0(i)}\big[\ell(Y-f_{\diamond}(X))\big]
    \geq \E_{M^0(i_*)}\big[\ell(Y-f_{\diamond}(X))\big].
\end{equation*}
As $f_{\diamond}\in\mathcal{F}$ was arbitrary, we can take the infimum on both sides and get
\begin{equation}
    \label{eq:minimum_part1}
    \inf_{f_{\diamond}\in \mathcal{F}}\sup_{i\in\mathcal{I}}\E_{M^0(i)}\big[\ell(Y-f_{\diamond}(X))\big]
    \geq \inf_{f_{\diamond}\in \mathcal{F}}\E_{M^0(i_*)}\big[\ell(Y-f_{\diamond}(X))\big].
\end{equation}
By the definition of the infimum there exists\footnote{In the case of squared loss $\ell(\cdot)=(\cdot)^2$, the minimum is attained at the conditional mean $\E_{M^{0}(i_*)}[Y\mid X]=f^0(X)+\E_{M^{0}(i_*)}[h(U,\epsilon_Y)\mid X]$.} $\psi\in\mathcal{F}$ satisfying
\begin{equation}
    \label{eq:minimum_part2}
    \E_{M^0(i_*)}\big[\ell(Y-\psi(X))\big]
    \leq\inf_{f_{\diamond}\in \mathcal{F}}\sup_{i\in\mathcal{I}}\E_{M^0(i)}\big[\ell(Y-f_{\diamond}(X))\big]+\varepsilon/2.
\end{equation}
Moreover, setting $\overline{\psi}\coloneqq\psi-f^0$ and expanding $Y$ from the structural assignment we get that
\begin{equation*}
    \E_{M^0(i_*)}\big[\ell(Y-\psi(X))\big]=\E_{M^0(i_*)}\big[\ell(f^0(X)+h(U,\epsilon_Y)-\psi(X))\big]=\E_{M^0(i_*)}\big[\ell(h(U, \epsilon_Y)-\overline{\psi}(X))\big].
\end{equation*}

Next, recall that $X^S\ci U\mid X^{S^c}$. Since $\epsilon_Y \ci (X, U)$, it also holds (using the properties of conditional independence) that $X^S\ci (U,\epsilon_Y) \mid X^{S^c}$ under $\P_{M^0(i_*)}$. Let $\mu$ denote the common dominating product measure and let $p_*$ denote the density corresponding to $\P_{M^0(i_*)}$ with respect to $\mu$. Then, by conditional independence it holds that
\begin{equation*}
    p_*(x^S, x^{S^c}, u, e)=p_*(x^S\mid x^{S^c})p_*(u, e\mid x^{S^c})p_*(x^{S^c}).
\end{equation*}
Expressing the expectation as an integral and applying Jensen's inequality once ($\ell$ is convex) we get that
\begin{align*}
    &\E_{M^0(i_*)}\big[\ell(h(U, \epsilon_Y)-\overline{\psi}(X))\big]\\
    &\quad=\int \ell(h(u, e)-\overline{\psi}(x^S,x^{S^c}))p_*(x^S\mid x^{S^c})p_*(u, e\mid x^{S^c})p_*(x^{S^c})\text{d}\mu(u, e, x^S,x^{S^c})\\
    &\quad\geq \int \ell\left( \int h(u, e)-\overline{\psi}(x^S,x^{S^c}) p_*(x^S\mid x^{S^c})\text{d}\mu(x^S)\right)p_*(u, e\mid x^{S^c})p_*(x^{S^c})\text{d}\mu(u, e, x^{S^c})\\
    &\quad= \int \ell\left( h(u, e)-\int\overline{\psi}(x^S,x^{S^c}) p_*(x^S\mid x^{S^c})\text{d}\mu(x^S)\right)p_*(u, e\mid x^{S^c})p_*(x^{S^c})\text{d}\mu(u, e, x^{S^c}).
\end{align*}
Let $\phi\in\mathcal{F}$ be a function that satisfies for all $x^{S^c}\in\supp(\P^{X^{S^c}}_{M^0(i_*)})$ that
\begin{equation*}
    \phi(x^{S^c})=\int\overline{\psi}(x^S,x^{S^c}) p_*(x^S\mid x^{S^c})\text{d}\mu(x^S).
\end{equation*}
Such a $\phi\in\mathcal{F}$ exists because $\mathcal{F}$ consists of all square-integrable functions. We then get that
\begin{align*}
    \E_{M^0(i_*)}\big[\ell(Y-\psi(X))\big]
    &\geq\int \ell\left( h(u, e)-\phi(x^{S^c})\right)p_*(u, e\mid x^{S^c})p_*(x^{S^c})\text{d}\mu(u, e, x^{S^c})\\
    &=\E_{M^0(i_*)}\big[\ell(h(U, \epsilon_Y)-\phi(X^{S^c}))\big]\\
    &=\E_{M^0(i_*)}\big[\ell(Y-(f^0(X)+\phi(X^{S^c})))\big].
\end{align*}
Combining this with \eqref{eq:minimum_part2} and defining $\overline{\phi}\in\mathcal{F}$ for all 
$x\in\R^d$ by $\overline{\phi}(x)\coloneqq f^0(x) + \phi(x^{S^{c}})$ leads to
\begin{equation}
    \label{eq:minimum_part3}
    \E_{M^0(i_*)}\big[\ell(Y-\overline{\phi}(X))\big]
    \leq\inf_{f_{\diamond}\in \mathcal{F}}\sup_{i\in\mathcal{I}}\E_{M^0(i)}\big[\ell(Y-f_{\diamond}(X))\big]+\varepsilon/2.
\end{equation}
Moreover, by construction it holds $\P_{M^0(i_*)}$-almost surely that 
\begin{equation}
    \label{eq:reduce_to_function}
    Y-\overline{\phi}(X)=h(U,\epsilon_Y)-\phi(X^{S^c}).
\end{equation}
Since $\P_{M^0(i_*)}$ is dominated by $\P_{M^0}$ and $\supp(\P_{M^0(i_*)}^{X})=\supp(\P_{M^0}^{X})$, \eqref{eq:reduce_to_function} also holds $\P_{M^0}$-almost surely. Finally, using that $X^{S^c}$ are not descendants of $Z$ and $Z$ is a source node, it holds that $Z\ci (X^{S^c}, U, \epsilon_Y)$ under $\P_{M^0}$, which implies
\begin{equation*}
    Z\ci Y-\overline{\phi}(X)\quad\text{under }\P_{M^0}.
\end{equation*}
Hence $\overline{\phi}\in\mathcal{F}_{\text{inv}}$.

\textbf{Step (2):} Since $\overline{\phi}\in\mathcal{F}_{\text{inv}}$ we can use \eqref{eq:i_does_not_matter1} and \eqref{eq:i_does_not_matter2} from the proof of Theorem~\ref{thm:invariant} together with \eqref{eq:minimum_part3} to get that
\begin{equation*}
    \sup_{i\in\mathcal{I}}\E_{M^0(i)}\big[\ell(Y-\overline{\phi}(X))\big]
    =\E_{M^0(i_*)}\big[\ell(Y-\overline{\phi}(X))\big]\leq
    \inf_{f_{\diamond}\in \mathcal{F}}\sup_{i\in\mathcal{I}}\E_{M^0(i)}\big[\ell(Y-f_{\diamond}(X))\big]
    +\varepsilon/2.
\end{equation*}
This proves \eqref{eq:eps_close_B}. Next, by the definition of the infimum there exists $f_{*}\in\mathcal{F}_{\text{inv}}$ satisfying
\begin{equation}
\label{eq:inf_bound}
    \E_{M^0}\big[\ell(Y-f_{*}(X))\big]\leq \inf_{f_{\diamond}\in\mathcal{F}_{\text{inv}}}\E_{M^0}\big[\ell(Y-f_{\diamond}(X))\big]+\epsilon/2.
\end{equation}
Now, assume that
\begin{equation*}
    \E_{M^0}\big[\ell(Y-f_{*}(X))\big]\leq \E_{M^0}\big[\ell(Y-\overline{\phi}(X))\big].
\end{equation*}
If this was not the case then \eqref{eq:eps_close_A} would be true and the proof would be complete. Again, applying Theorem~\ref{thm:invariant} and defining $c_{\min}\coloneqq\inf_{f_{\diamond}\in \mathcal{F}}\sup_{i\in\mathcal{I}}\E_{M^0(i)}\big[\ell(Y-f_{\diamond}(X))\big]$ this implies
\begin{equation*}
c_{\min}\leq
    \sup_{i\in\mathcal{I}}\E_{M^0(i)}\big[\ell(Y-f_{*}(X))\big]\leq \sup_{i\in\mathcal{I}}\E_{M^0(i)}\big[\ell(Y-\overline{\phi}(X))\big]\leq c_{\min}+\epsilon/2.
\end{equation*}
This implies that
\begin{equation*}
    \sup_{i\in\mathcal{I}}\E_{M^0(i)}\big[\ell(Y-\overline{\phi}(X))\big]\leq \sup_{i\in\mathcal{I}}\E_{M^0(i)}\big[\ell(Y-f_{*}(X))\big] + \epsilon/2= \EX_{M^0}\big[\ell(Y-f_{*}(X))\big] + \epsilon/2.
\end{equation*}
Combined with \eqref{eq:inf_bound} this proves \eqref{eq:eps_close_A}, which concludes the proof of Theorem~\ref{thm:generalization}.
\end{proof}

\section{Additional details - Conditional IV}
\label{sec:app_implmentation_civ}

We consider the following SCM.\footnote{
The results can be extended to other SCMs, too; we only show one for simplicity.
E.g., we could also allow for the edge between $Z$ and $W$ to point towards $W$, that is, for $Z$ to be a cause of $W$.}

\begin{minipage}{0.5\linewidth}
  \begin{align*}
    W&\coloneqq m(\epsilon_W,V,U)\\
    Z&\coloneqq q(W,V,\epsilon_Z)\\
    X&\coloneqq g(Z, W, U, \epsilon_X)\\
    V&\coloneqq \epsilon_{V}\\
    U&\coloneqq \epsilon_{U}\\
    Y&\coloneqq f(X) + h(W,U, \epsilon_Y),
  \end{align*}
  \hspace{0.5em}
\end{minipage}%
\hspace{-0.08\linewidth}
\begin{minipage}{0.44\linewidth}
  \resizebox{0.7\textwidth}{!}{
    \begin{tikzpicture}[scale=1.4]
      \tikzstyle{VertexStyle} = [shape = circle, minimum width = 3em,
      fill=lightgray] \Vertex[Math,L=Y,x=1,y=0]{Y}
      \Vertex[Math,L=X,x=-1,y=0]{X}
      \Vertex[Math,L=Z,x=-3,y=0]{Z}
      \Vertex[Math,L=W,x=-1,y=1.5]{W}
      \tikzset{EdgeStyle/.append style = {-Latex, line width=1}}
      \Edge[label=$f$](X)(Y)
      \Edge[label=$g$](Z)(X)
      \Edge[label=$q$](W)(Z)
      \Edge[label=$h$](W)(Y)
      \Edge[label=$g$](W)(X)
      \tikzstyle{EdgeStyle}=[bend right=50, line width=1, Latex-Latex, dashed]
      \Edge[label={\Large $U$}](X)(Y)
     \tikzstyle{EdgeStyle}=[bend left=50, line width=1, Latex-Latex, dashed]
     \Edge[label={\Large $V$}](Z)(W)
      \tikzstyle{EdgeStyle}=[bend left=50, line width=1, Latex-Latex, dashed]
      \Edge[label={\Large $U$}](W)(Y)
    \end{tikzpicture}}
\end{minipage} \\
where $(\epsilon_W,\epsilon_Z, \epsilon_{V}, \epsilon_{U}, \epsilon_X, \epsilon_Y)\sim Q$ are jointly independent noise variables, $Z\in\mathbb{R}^r$
are \emph{instruments}, $V\in\mathbb{R}^{s_1}$ and $U\in\mathbb{R}^{s_2}$ are unobserved variables, $W\in\mathbb{R}^t$ are observed \emph{covariates}, $X\in\mathbb{R}^d$ are \emph{predictors} and $Y\in\mathbb{R}$ is a \emph{response}. 
For simplicity, we additionally assume that $\E[W]=\E[Z]=\E[X]=\E[h(W,U, \epsilon_Y)]=0$. We consider models of the form $M=(m,q,g,f,h)$ and assume that either there is no edge from $V$ to $W$, i.e., $W=m(\epsilon_W,U)$ or no edge between $W$ and $U$, i.e., $W=m(\epsilon_W,V)$. As before, $M^0=(m^0,q^0,g^0,f^0,h^0)$ denotes the data generating model and $f^0(\cdot)=\phi(\cdot)^\top \theta^0$ for some $\theta^0 \in \Theta \subseteq \R^p$, with a (known) function $\phi: \R^d \rightarrow \R^p$.

Due to the unobserved confounding, it may be the case that $Y-f^0(X)$ and $Z$ are dependent and, as a result, restriction \eqref{eq:moment2} may not hold for the true model $M^0$. But as we assume that there is either no confounding between $Z$ and $W$ or between $W$ and $Y$, we can instead use the \textit{conditional independence restriction}
\begin{equation}
    Z \ci Y-\phi(X)^\top \theta \,| \,W
\end{equation}
to identify $f^0$. A corresponding sufficient and necessary identifiability condition for $f^0$ is 
\begin{equation*}
    Z \ci h(W,U,\epsilon_Y) + \phi(X)^\top \tau \,| \,W \, \, \implies \tau=0.
\end{equation*}
Under this condition we could proceed as for the unconditional independence restriction and estimate $f^0$ using a loss that is minimized if restriction~\eqref{condition: CIS 1} is satisfied, e.g., using a conditional independence measure.
We can avoid using a conditional independence restriction, in the following two special cases.

First, consider the case in which $W \ci V$ and, in addition, assume %
$q(W,\epsilon_z)=q_1(W)+\epsilon_Z$. Then we can use the 
following 
independence restriction
 \begin{equation}
     Z - q_1(W) \ci  Y-\phi(X)^\top \theta.
     \label{condition: CIS 1+}
 \end{equation}
Under our assumptions, $q^0_1(W)=E[Z|W]$ and therefore $q^0_1$ is identifiable. The corresponding identifiability condition becomes
\begin{equation}
    Z-q_1(W) \ci h(W,U,\epsilon_Y) + \phi(X)^\top \tau  \,\implies\, \tau=0.
    \label{condition: CIS 1 unique}
\end{equation}

Second, consider the case in which we instead assume that $W \ci U$ and that $h(W,U,\epsilon_Y)=h_1(W) +h_2(U,\epsilon_Y)$. Then we can use the 
independence restriction:
\begin{equation}
    Y-\phi(X)^\top \theta - \phi(W)^\top \gamma \, \ci \, (Z,W),
    \label{condition: CIS 2}
\end{equation}
where we assume that\footnote{For simplicity, we use the same basis $\phi$.} $h_1^0(\cdot)=\phi(\cdot)\gamma^0$ for $\gamma^0 \in \mathbb{R}^{t}$.
The identifiability condition becomes
\begin{align}
\begin{split}
    (Z,W) \ci h_2(U,\epsilon_Y) &+ \phi(X)^\top \tau_1 + \phi(W)^\top \tau_2 \,\implies\, \tau_1=0.
\end{split}
    \label{condition: CIS 2 unique}
\end{align}
This identifiability condition translates to a corresponding moment restriction and as in the case of unconditional IV, we can again show that the independence version leads to strictly stronger identifiability. 

\begin{proposition}[CIV - Identifiability of the independence restriction is strictly stronger]
\label{prop:civ}
    Consider a conditional IV model $M^0$ and assume that $\phi(\cdot)=(\phi_1(\cdot),\dots,\phi_p(\cdot))$ is a collection of basis functions such that $f^0(\cdot)=\phi(\cdot)^\top \theta^0$ for some $\theta^0 \in \mathbb{R}^{p}$ and $h_1^0(\cdot)=\phi(\cdot)^\top \gamma^0$ for some $\gamma^0 \in \mathbb{R}^{t}$. Then, if there exists $k\in \mathbb{N}$ and $\eta: \R^{(r+t)} \rightarrow \R^k$ such that $\theta^0$ is identifiable from the moment restriction
    \begin{equation*}
        \E [\eta(Z, W)(Y - \phi(X)^\top \theta - \phi(W)^\top \gamma)] = 0
        \text{,}
    \end{equation*}
    then $\theta^0$ is also identifiable from the conditional independence restriction \eqref{condition: CIS 2}. Furthermore, there exist a conditional IV model such that $\theta^0$ is identifiable from the independence restriction \eqref{condition: CIS 2} but not from the corresponding moment restriction. An example for that is the model
    \begin{align*}
        V \coloneqq \epsilon_V,\,
        U \coloneqq \epsilon_U,\,
        &W\coloneqq V + \epsilon_W,\,
        Z\coloneqq W + V + \epsilon_{Z},\, 
        X\coloneqq Z U + \epsilon_X \\
        & Y\coloneqq X + W + U + \epsilon_Y,
    \end{align*}
     where $\epsilon_Z, \epsilon_{V}, \epsilon_{U}, \epsilon_{W}, \epsilon_X$ and $\epsilon_Y$ are jointly independent standard Gaussian variables and we consider the base functions $\phi(X)=(X)$ and $\phi(W)=(W)$.
\end{proposition}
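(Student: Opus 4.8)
The plan is to follow the template of the proof of Proposition~\ref{prop:stronger_identifability}: prove the forward implication (moment identifiability implies conditional-independence identifiability) by contraposition at the level of covariances, and then dispatch the explicit example by one moment computation (failure of the moment restriction) and one conditional-variance computation (success of the independence restriction).

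For the forward implication I would argue the contrapositive: if the identifiability condition~\eqref{condition: CIS 2 unique} fails, then so does the moment rank condition associated with the stated moment restriction. So suppose there is $\tau_1\neq 0$ and some $\tau_2$ with $(Z,W)\ci h_2(U,\epsilon_Y)+\phi(X)^\top\tau_1+\phi(W)^\top\tau_2$. The structural fact I would record first is that under the case-2 assumption $W\ci U$ (so $W=m(\epsilon_W,V)$, and hence $Z$ and $W$ are functions of $(\epsilon_W,V,\epsilon_Z)$ alone), joint independence of the noise gives $(Z,W)\ci(U,\epsilon_Y)$ and therefore $\Cov(\eta(Z,W),h_2(U,\epsilon_Y))=0$ for every $\eta$. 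Combining this with $\Cov(\eta(Z,W),\,h_2(U,\epsilon_Y)+\phi(X)^\top\tau_1+\phi(W)^\top\tau_2)=0$ (implied by the independence) yields $\Cov(\eta(Z,W),\,\phi(X)^\top\tau_1+\phi(W)^\top\tau_2)=0$ for all $\eta$, which is exactly the statement that the moment restriction cannot separate the nonzero direction $\tau_1$; hence no $(k,\eta)$ identifies $\theta^0$, as desired.

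For the example I would verify the two halves separately. In reduced form $W=V+\epsilon_W$, $Z=2V+\epsilon_W+\epsilon_Z$, $X=ZU+\epsilon_X$ and $Y=X+W+U+\epsilon_Y$, so $\theta^0=1$, $\gamma^0=1$, $h_2(U,\epsilon_Y)=U+\epsilon_Y$, and the centering assumptions ($\E[W]=\E[Z]=\E[X]=\E[h]=0$) hold; one also checks $W\ci U$, so we are in case~2 and~\eqref{condition: CIS 2} holds at the truth since $Y-X-W=U+\epsilon_Y\ci(Z,W)$. The failure of the moment restriction rests on the single observation that $U=\epsilon_U$ and $\epsilon_X$ are independent of $(Z,W)$ and mean zero, so for every $\eta$ we get $\E[\eta(Z,W)X]=\E[\eta(Z,W)Z]\,\E[U]+\E[\eta(Z,W)]\,\E[\epsilon_X]=0$; thus the $\phi(X)$-column of the moment matrix vanishes identically and the direction $\tau_1=1,\tau_2=0$ is never separated, so $\theta^0$ is not identifiable from the moment restriction for any choice of $\eta$.

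For the success of the independence restriction I would set $K_{\tau_1,\tau_2}:=h_2(U,\epsilon_Y)+\tau_1 X+\tau_2 W=(1+\tau_1 Z)U+\tau_1\epsilon_X+\epsilon_Y+\tau_2 W$ and show $K_{\tau_1,\tau_2}\nci(Z,W)$ whenever $\tau_1\neq0$. Since $U,\epsilon_X,\epsilon_Y$ are independent standard Gaussians independent of $(Z,W)$, conditioning on $(Z,W)=(z,w)$ gives a Gaussian with $\mathrm{Var}(K_{\tau_1,\tau_2}\mid Z=z,W=w)=(1+\tau_1 z)^2+\tau_1^2+1$. Independence would force this conditional variance to be constant $\P$-a.s., but it is a non-degenerate quadratic in $z$ as soon as $\tau_1\neq0$ and $Z$ has a density, a contradiction; hence $\tau_1=0$ and~\eqref{condition: CIS 2 unique} holds. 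I expect the main obstacle to be the forward implication rather than the example: one must state the CIV moment rank condition precisely and justify passing from the covariance identity to the violation of that condition, which involves the same centering subtlety (whether constant components of $\eta$ can be informative) already present in Proposition~\ref{prop:stronger_identifability}(i); the example's two computations are routine once the reduced form is written out.
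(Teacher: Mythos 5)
Your proof is correct, and for two of its three parts it follows the paper's own route: the forward implication is the same contraposition via bilinearity of covariance (using $(Z,W)\ci h_2(U,\epsilon_Y)$, which you justify from the case-2 structure where the paper merely asserts it), and the failure of the moment restriction in the example is the same observation that $\E[\eta(Z,W)X]=0$ for every $\eta$, so the direction $\tau_1=1$ can never be separated. Where you genuinely depart is the verification that the identifiability condition \eqref{condition: CIS 2 unique} holds in the example. The paper argues unconditionally: it sets $K_{\tau}= U+\epsilon_Y+\tau_1(ZU+\epsilon_X)+\tau_2 W$ and computes $\E[Z^2K_{\tau}^2]=12+114\tau_1^2+30\tau_2^2$ versus $\E[Z^2]\E[K_{\tau}^2]=12+42\tau_1^2+12\tau_2^2$, which requires fourth-moment bookkeeping for the correlated Gaussian pair $(Z,W)$ and concludes that equality forces $\tau_1=0$. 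You instead condition on $(Z,W)=(z,w)$, note that $K_{\tau_1,\tau_2}$ is then Gaussian with variance $(1+\tau_1 z)^2+\tau_1^2+1$, and observe that independence would force this conditional variance to be almost surely constant, which is impossible for a non-degenerate quadratic in $z$ when $Z$ has a density. Your argument is cleaner and somewhat more robust: it needs only $(U,\epsilon_X,\epsilon_Y)\ci(Z,W)$, unit variances, and non-degeneracy of $Z$, with no explicit moment computation, whereas the paper's calculation is elementary and yields an explicit quantitative gap between the two moments. Both approaches leave the same centering subtlety (expectation versus covariance in the moment restriction, i.e., the role of constant components of $\eta$) at the same level of informality, which you appropriately flag rather than resolve — the paper does not resolve it either.
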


\begin{proof}

For the first claim, consider a $\theta^0$ that is not identifiable from restriction~\eqref{condition: CIS 2}. Then there exists a $\tau_1 \neq 0$ for which condition \eqref{condition: CIS 2 unique} is violated. By the independence statement for $\tau_1$, it holds for all $k \in \mathbb{N}$, functions $\eta: \R^{(r+t)} \rightarrow \R^k$ and some $\tau_2 \in \R$ that
\[\Cov [ \eta(Z,W), h(U,\epsilon_Y) + \phi(X)^\top \tau_1+\phi(W)^\top \tau_2] = 0.\] But as $(Z,W) \ci h(U,\epsilon_Y)$ it also holds that $\Cov [ \eta(Z,W), h(U,\epsilon_Y)]=0$. By the additivity of the covariance it follows that 
\[\Cov [ \eta(Z,W),\phi(X)^\top \tau_1+\phi(W)^\top \tau_2] = 0,\]
which is the identifiability condition corresponding to the moment restriction.

For the second claim, note that in our example SCM, $\mathbb{E}[X|(Z,W)]=0$. It follows that for all $k \in \mathbb{N}$ and functions $\eta: \R^{(r+t)} \rightarrow \R^k$,
$\Cov [ \eta(Z,W), X] =0$. We can therefore not uniquely identify the causal function $f^0(x)=x$ with the moment restriction.

We now consider condition \eqref{condition: CIS 2 unique}. Consider $K_{\tau}\coloneqq U + \epsilon_Y + \tau_1 (ZU + \epsilon_X) +\tau_2 W$. It holds that
\begin{align*}
\mathbb{E}[Z^2K_{\tau}^2]
&= \mathbb{E}[Z^2(U + \epsilon_Y + \tau_1 (ZU + \epsilon_X) +\tau_2 W)^2]\\ 
&= \mathbb{E}[Z^2U^2] + \mathbb{E}[Z^2\epsilon^2_Y] + \tau_1^2 \mathbb{E}[Z^4U^2] + \tau_1^2\mathbb{E}[Z^2\epsilon^2_X] +\tau_2^2\mathbb{E}[Z^2W^2]\\ 
&= 12 + 114\tau^2_1 + 30\tau^2_2,
\end{align*}
while
\begin{align*}
\mathbb{E}[Z^2]\mathbb{E}[K^2_{\tau}]
&= \mathbb{E}[Z^2]\mathbb{E}[(U + \epsilon_Y + \tau_1 (ZU + \epsilon_X) +\tau_2 W)^2]\\ 
&= \mathbb{E}[Z^2] (\mathbb{E}[U^2] +\mathbb{E}[\epsilon_Y^2] +\tau^2_1\mathbb{E}[Z^2U^2] +\tau^2_1\mathbb{E}[\epsilon^2_X] + \tau^2_2\mathbb{E}[W^2])\\
&= 6(2+7\tau^2_1 + 2 \tau^2_2)\\
&= 12+42\tau^2_1 + 12 \tau^2_2.
\end{align*}
These two terms are only equal if $72\tau_1^2+18\tau_2^2=0$, which requires that $\tau_1=0$.
Therefore, condition \eqref{condition: CIS 2 unique} holds, meaning we can identify the causal function $f^0(x)= x$.
\end{proof}

We now propose two procedures; one for each of the identifiabilty conditions we discussed. We conjecture that combining both leads to a doubly robust estimator.

For restriction~\eqref{condition: CIS 1+},
we propose the following estimation procedure. Given an i.i.d.\ sample $(x_i, y_i, z_i, w_i)_{i=1}^n$ of the variables $(X, Y, Z, W)$, we can first use any flexible non-parametric estimator to estimate $q_1^0$.
Based on our estimate $\hat{q}_1$ we then learn $\hat{f}$ as
\begin{align*}
    \argmin_{f \in \mathcal{F}} \widehat{\HSIC}((r_i^f , z^{\hat{q}}_i)_{i=1}^n; k_{R}, k_{Z^{\hat{q}_1}}),
\end{align*}
as our estimate for $f^0$, where $r_i^{f} = y_i - \hat{f}(x_i)$ and $z^{\hat{q}_1}_i = z_i - \hat{q}_1(w_i)$ and $\mathcal{F}$ is a function class. Here, $\widehat{\HSIC}$ is the same empirical $\HSIC$ estimator as in Section \ref{sec:algorithm}. 

For restriction~\eqref{condition: CIS 2}, we propose the following estimation procedure. Given an i.i.d.\ sample $(x_i, y_i, z_i, w_i)_{i=1}^n$ of the variables $(X, Y, Z, W)$, we estimate $f^0$ as the first entry of
\begin{align*}
    \argmin_{f \in \mathcal{F}, \,  k \in \mathcal{K}} \widehat{\HSIC}((r^{f,k}_i, (z_i,w_i))_{i=1}^n; k_{R_Y}, k_{Z,W}),
\end{align*}
where $r^{f,k}_i= y_1 - f(x_i) - k(w_i)$ and, $\mathcal{F}$ and $\mathcal{K}$ are function classes. Here, $\widehat{\HSIC}$ is again the same empirical $\HSIC$ estimator as in Section \ref{sec:algorithm}.

\newpage
\section{Algorithms: HSIC-X and HSIC-X-pen}\label{sec:app_algo}
We provide 
details for HSIC-X in Algorithm~\ref{alg:ind_iv}
and details for HSIC-X-pen in Algorithm~\ref{alg:ind_iv_pen}. We propose to choose the tuning parameter $\lambda$ for HSIC-X-pen as the largest possible value for which an HSIC-based independence test between the estimated residuals and the instruments is not rejected (see Section~\ref{sec:hsic_pen}).
\begin{algorithm}[ht]
\caption{HSIC-X}
\label{alg:ind_iv}
\SetAlgoLined
\KwIn{observations $(x_i, y_i, z_i)_{i=1}^n$, kernels $k_{R}$ and $k_{Z}$, batch size $m$, learning rate $\gamma$, number of gradient steps per cycle $k$, significance level $\alpha$ and maximum restarting times $t$}
\, Initialize a restart counter $\ell \leftarrow 0$ \\
Compute the bandwidth $\sigma_Z$ of $k_Z$ with median heuristic \\
\Repeat{$p \geq \alpha$ or $\ell \geq t$}{
\eIf{$\ell = 0$}{Initialize parameters $\theta$ at the OLS solution}{Randomly initialize parameters $\theta$}
    \Repeat{convergence}{
    \, Compute the residuals $r_i^{\theta} \coloneqq y_i - f_{\theta}(x_i)$ \\
    Compute the bandwidth $\sigma_{R^{\theta}}$ of $k_{R^{\theta}}$ with median heuristic \\
        \RepTimes{$k$}{
        \,\,  Sample a mini-batch $(x_j, y_j, z_j)_{j=1}^m$ \\
        Compute the residuals $r_j^{\theta} \coloneqq y_j - f_{\theta}(x_j)$ \\
        Compute the HSIC loss $\mathcal{L}(\theta) \coloneqq \tr(KHLH)$ \\
        Update parameters $\theta \leftarrow \theta - \gamma \nabla \mathcal{L}(\theta)$
        }
    }
\, Compute the residuals $r^{\theta}_i \coloneqq y_i - f_{\theta}(x_i)$ \\
Compute the p-value $p$ of the independence test between $(r_i^{\theta})_{i=1}^n$ and
$(z_i)_{i=1}^n$ \\
Update the counter $\ell \leftarrow \ell + 1$
} 
\KwOut{Final estimate $\tilde{f}_{\theta}(\cdot) \coloneqq f_{\theta}(\cdot) - \frac{1}{n}\sum_{i=1}^n y_i - f_{\theta}(x_i)$}
\end{algorithm}
\begin{algorithm}[ht!]
\caption{HSIC-X-pen}
\label{alg:ind_iv_pen}
\SetAlgoLined
\KwIn{observations $(x_i, y_i, z_i)_{i=1}^n$, kernels $k_{R}$ and $k_{Z}$, batch size $m$, learning rate $\gamma$, number of gradient steps per cycle $k$, significance level $\alpha$ and maximum restarting times $t$, penalization parameter $\lambda$, prediction loss $\ell$}
\, Initialize a restart counter $l \leftarrow 0$ \\
Compute kernel bandwidth $\sigma_Z$ of $k_Z$ with median heuristic \\
\Repeat{$p \geq \alpha$ or $l \geq t$}{
\eIf{$l = 0$}{Initialize parameters $\theta$ at the OLS solution}{Randomly initialize parameters $\theta$}
    \Repeat{convergence}{
    \, Compute the residuals $r_i^{\theta} \coloneqq y_i - f_{\theta}(x_i)$ \\
    Compute kernel bandwidth $\sigma_R$ of $k_R$ with median heuristic \\
        \RepTimes{$k$}{
        \,\,  Sample a mini-batch $(x_j, y_j, z_j)_{j=1}^m$ \\
        Compute the residuals $r_j^{\theta} \coloneqq y_j - f_{\theta}(x_j)$ \\
        Compute loss $\mathcal{L}(\theta) \coloneqq \lambda\sum_j \ell(r_j^{\theta})+(1-\lambda)\tr(KHLH)$ \\
        Update parameter $\theta \leftarrow \theta - \gamma \nabla \mathcal{L}(\theta)$
        }
    }
\, Compute the residuals $r^{\theta}_i \coloneqq y_i - f_{\theta}(x_i)$ \\
Compute the p-value $p$ of the independence test between $(r_i^{\theta})_{i=1}^n$ and
$(z_i)_{i=1}^n$ \\
Update the counter $l \leftarrow l + 1$
}
\KwOut{Final estimate $\tilde{f}_{\theta}(\cdot) \coloneqq f_{\theta}(\cdot) - \frac{1}{n}\sum_{i=1}^n y_i - f_{\theta}(x_i)$}
\end{algorithm}

\newpage

\section{Additional Experiment Details and Results}\label{sec:app_exp}
\subsection{Multi-dimensional Setting}{\label{sec:app_exp_multidim}}
We consider the following IV models in our experiments:
    \begin{equation*}
        M: \begin{cases}
        Z \coloneqq  \epsilon_{Z} \\
        U \coloneqq  \epsilon_{U} \\
        X \coloneqq  J Z^2 \epsilon_{X} + B Z + U \\
        Y \coloneqq  \beta^\top X - 2 U + \epsilon_Y,
        \end{cases}
\end{equation*}
where $\epsilon_U, \epsilon_Y \simiid \mathcal{N}(0, 1)$, $\epsilon_Z \sim \mathcal{N}(0, I_{d_Z})$, $\epsilon_X \sim \mathcal{N}(0, I_{d_X})$ are independent noise variables, $d_X$ and $d_Z$ represent the dimensions of $X$ and $Z$, $J$ and $B$ are $d_X \times d_Z$ matrices controlling the influence of the instruments $Z$ and $\beta$ is the causal parameters. In the experiment, $J$ is an $d_X \times d_Z$ all-ones matrix, $B_{i,j} \sim \text{Uniform}(-4, 4)$ and $\beta \sim \mathcal{N}(0, I_{d_X})$.

Figure~\ref{fig:exp_multi_dim} reports the MSE of each method as dimension ($d_{Z}$) of the instruments varies for some fixed predictor's dimensionality ($d_{X}$). When $d_{Z} < d_{X}$, the identifiability result suggests that 2SLS cannot consistently estimate the causal function (which is reflected by the high values of the MSE); while HSIC-X can still gain some improvement over the OLS solution. When $d_{Z} \geq d_{X}$, the performance of our method is on par with that of 2SLS.

\begin{figure}[t]
\centering
\includegraphics[width=0.48\textwidth]{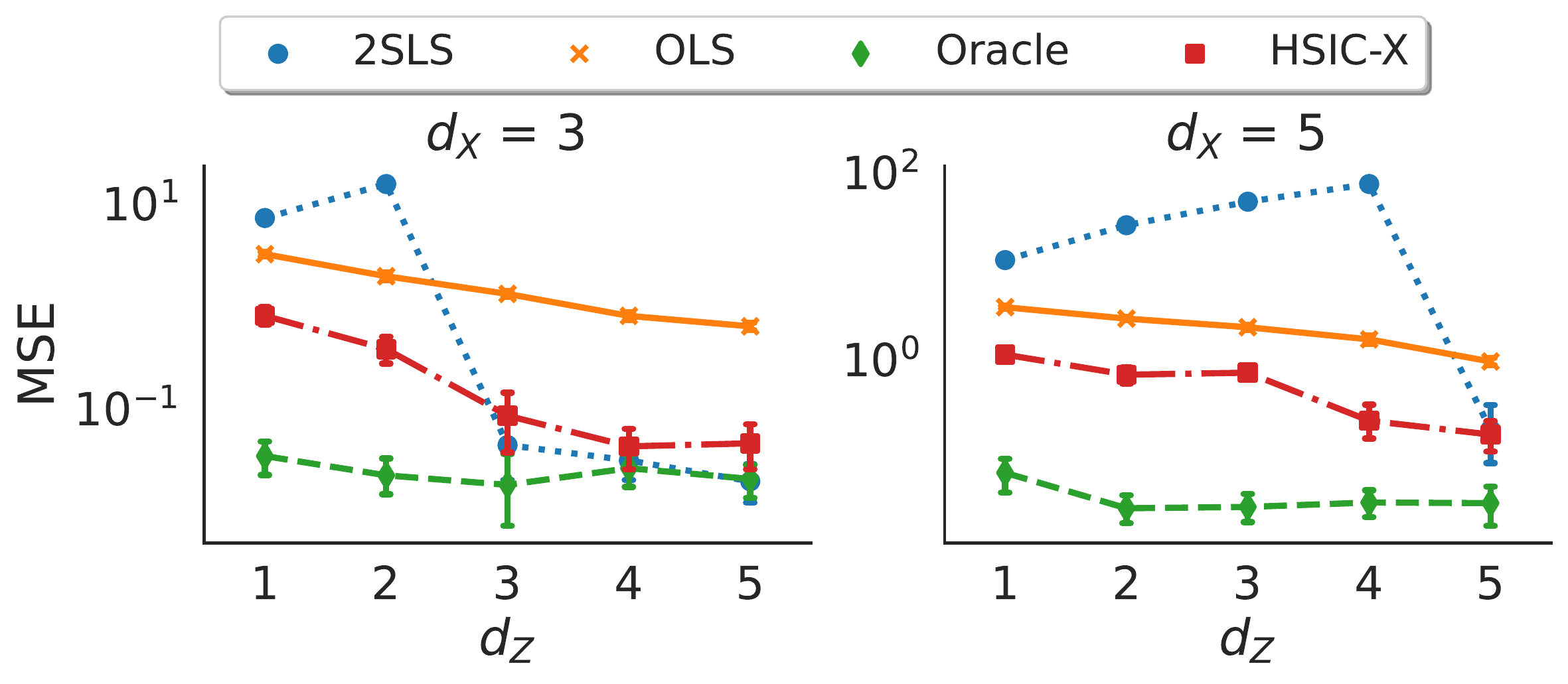}
\caption{MSEs of different estimators under varying size $d_Z$ of the instruments when (a) the predictors' dimension $d_{X}$ is 3, and (b) the predictors' dimension is 5. 2SLS is inconsistent and underperforms OLS when $d_Z < d_X$, while HSIC-X shows a substantial improvement over OLS in such settings.}
\label{fig:exp_multi_dim}
\end{figure}

\subsection{Known Basis Functions}\label{sec:app_exp_basis}
Figure~\ref{fig:estimated_causal_functions} shows some of the functions estimated by HSIC-X, OLS, 2SLS, along with the underlying causal function under various settings. In short, when the instrument has no effect on the mean of $X$ ($\alpha = 0$), 2SLS fails to produce sensible estimates because of the non-identifiability under the moment restriction. On the other hand, the proposed method (HSIC-X) can still identify the causal function thanks to the independence restriction and yields reasonable estimates in all of the settings.
\begin{figure*}[ht!]
    \centering
    \subfigure[$f^0$: Linear, $\P_{\epsilon_Z}$: Binary, $\alpha$: 0]{
         \centering
         \includegraphics[width=0.34\textwidth]{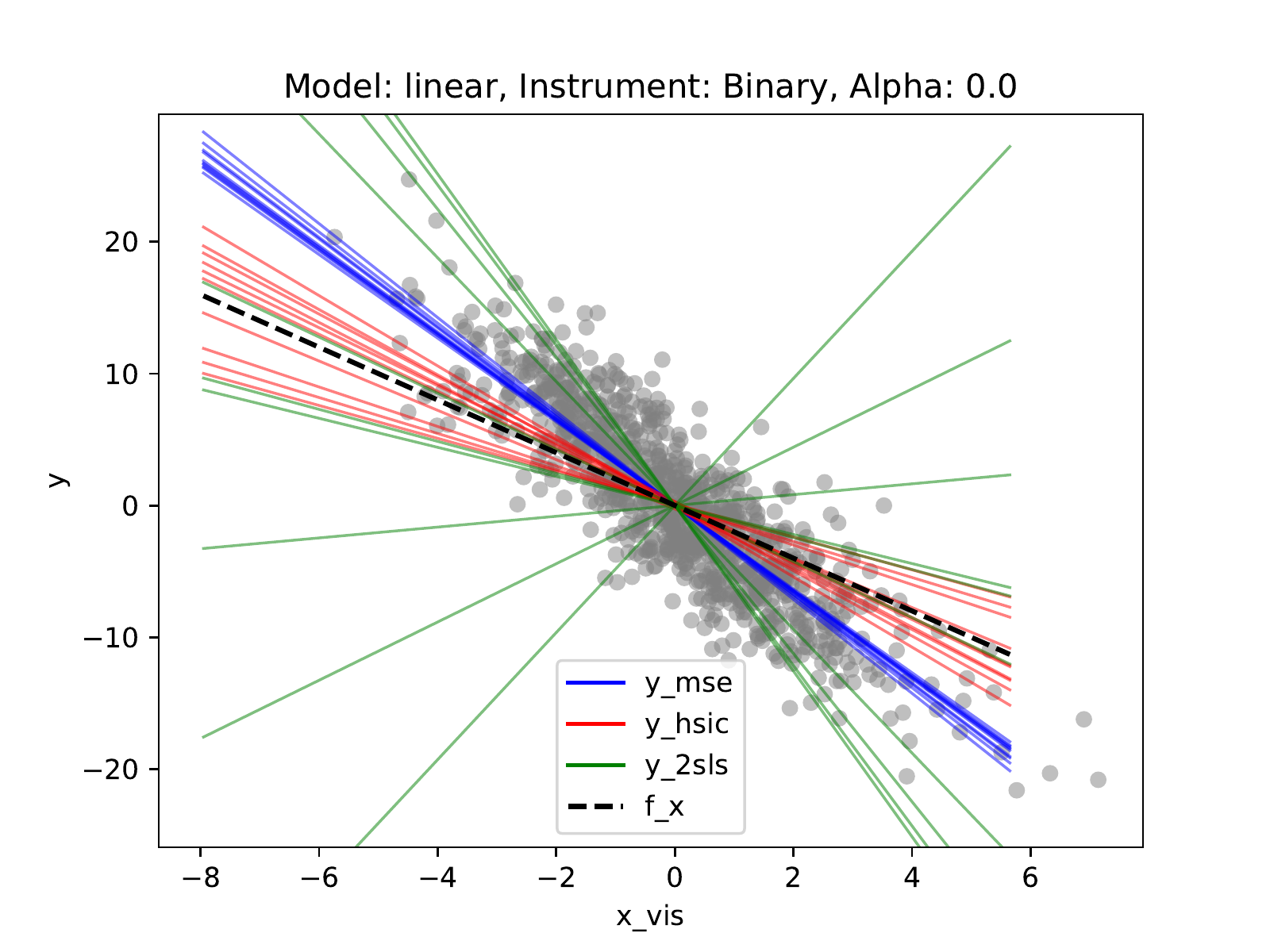}
    \label{fig:lin_binary11}
    }
    \subfigure[$f^0$: Linear, $\P_{\epsilon_Z}$: Binary, $\alpha$: 0.4]{
         \centering
         \includegraphics[width=0.34\textwidth]{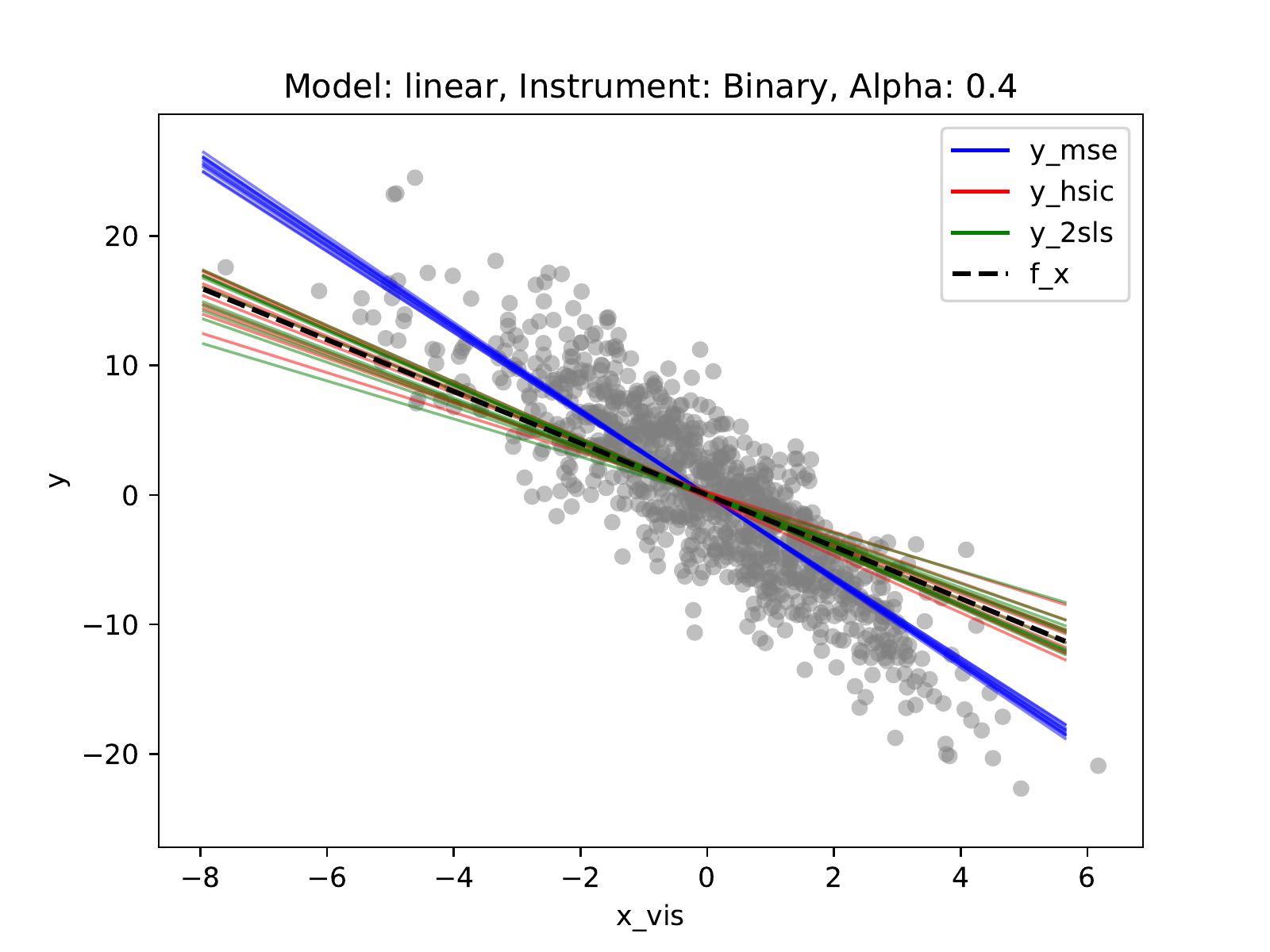}
    \label{fig:lin_binary12}
    }
    \subfigure[$f^0$: Linear, $\P_{\epsilon_Z}$: Gaussian, $\alpha$: 0]{
         \centering
         \includegraphics[width=0.34\textwidth]{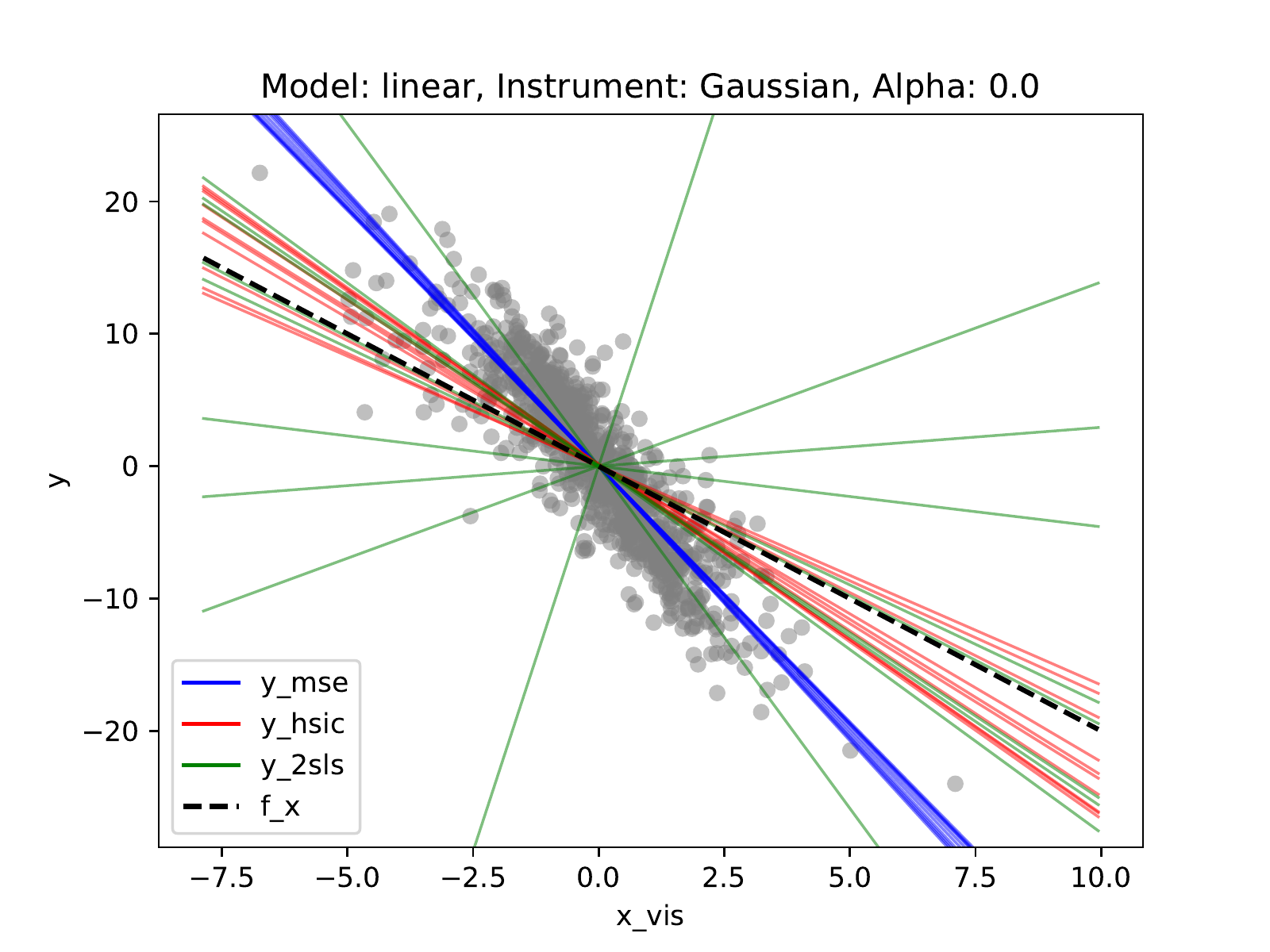}
    \label{fig:lin_binary13}
    }
    \subfigure[$f^0$: Linear, $\P_{\epsilon_Z}$: Gaussian, $\alpha$: 0.4]{
         \centering
         \includegraphics[width=0.34\textwidth]{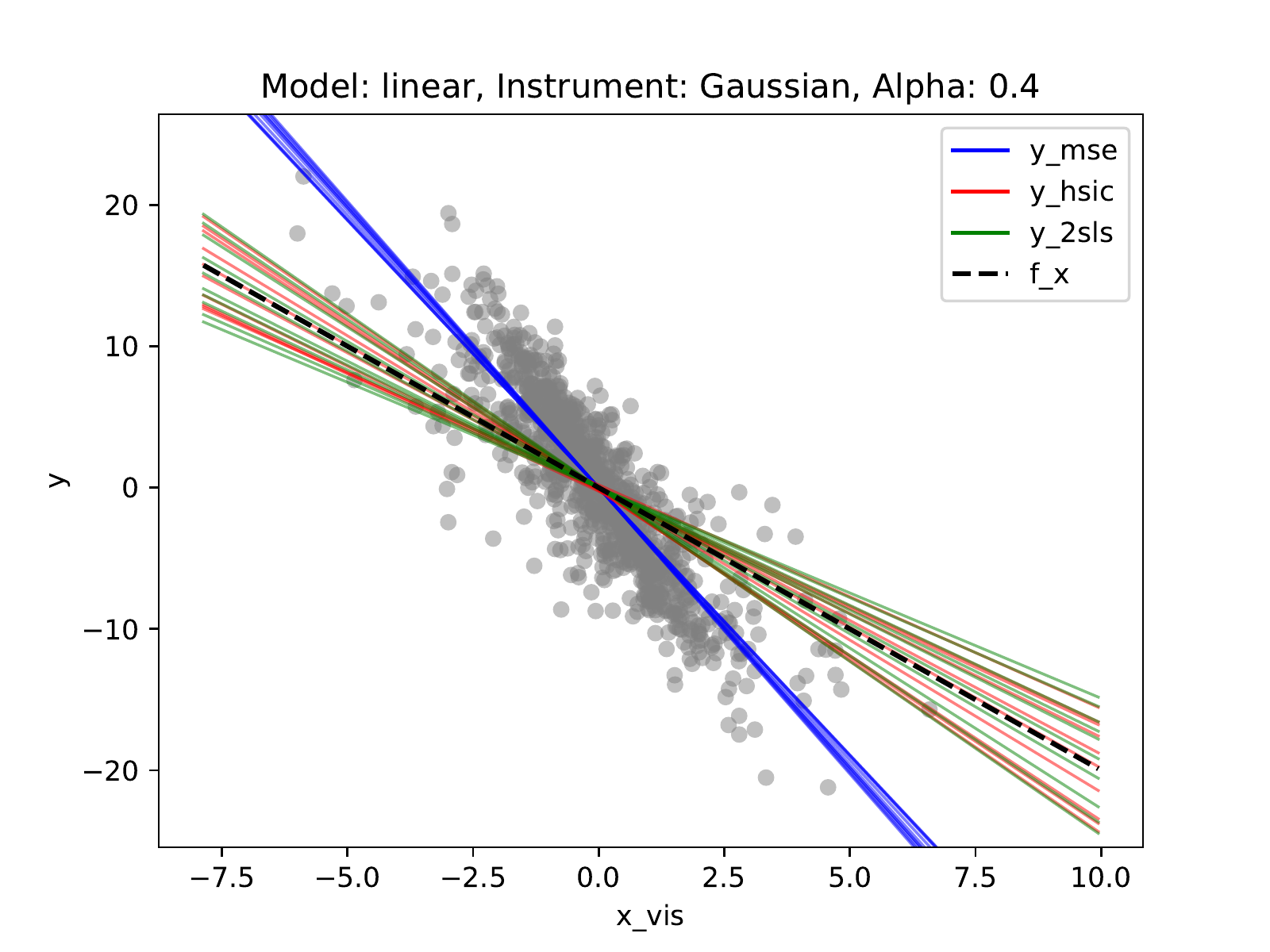}
    \label{fig:lin_binary14}
    }
    \subfigure[$f^0$: Nonlinear, $\P_{\epsilon_Z}$: Binary, $\alpha$: 0]{
         \centering
         \includegraphics[width=0.34\textwidth]{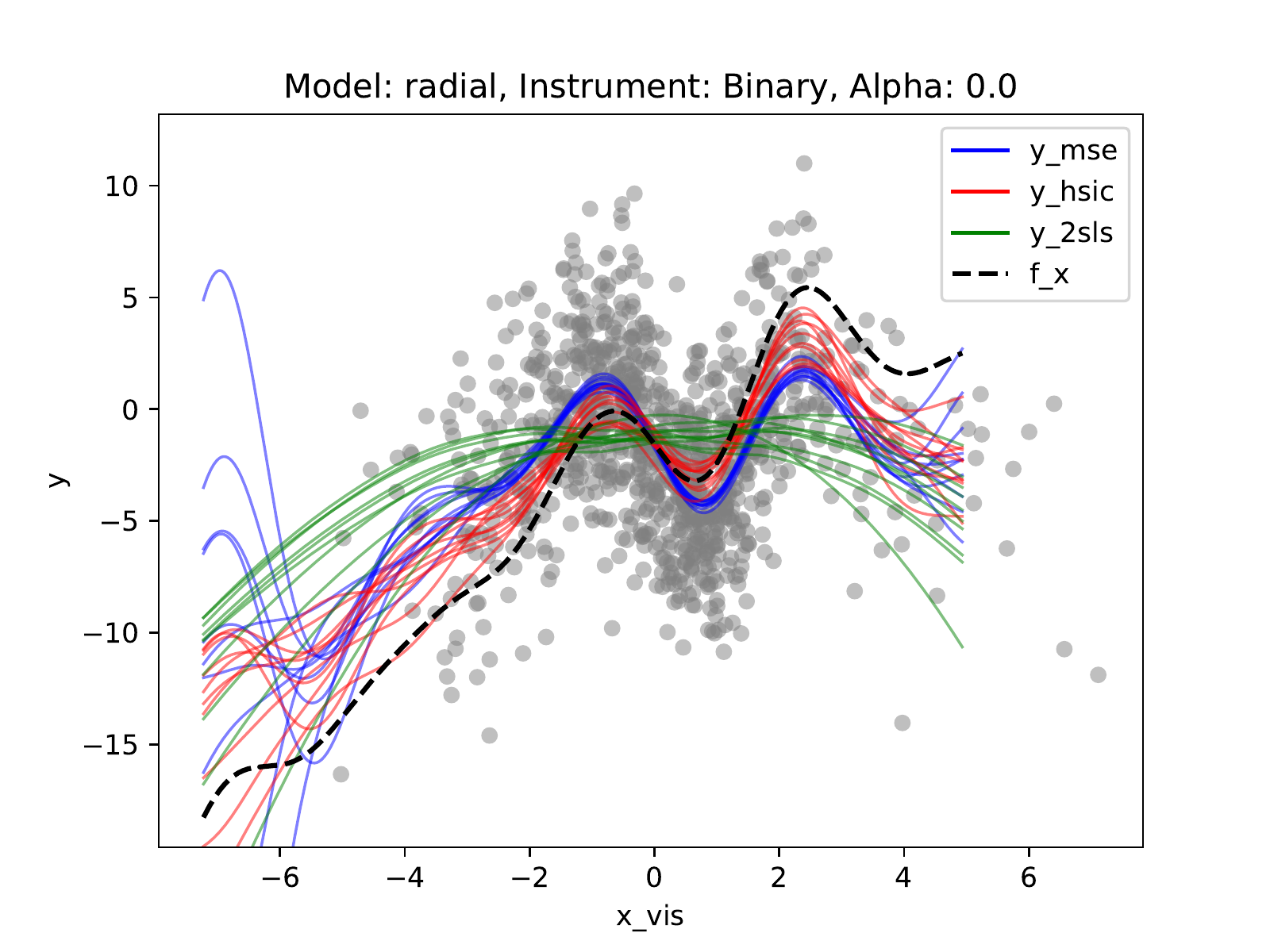}
    \label{fig:lin_binary15}
    }
    \subfigure[$f^0$: Nonlinear, $\P_{\epsilon_Z}$: Binary, $\alpha$: 1.0]{
         \centering
         \includegraphics[width=0.34\textwidth]{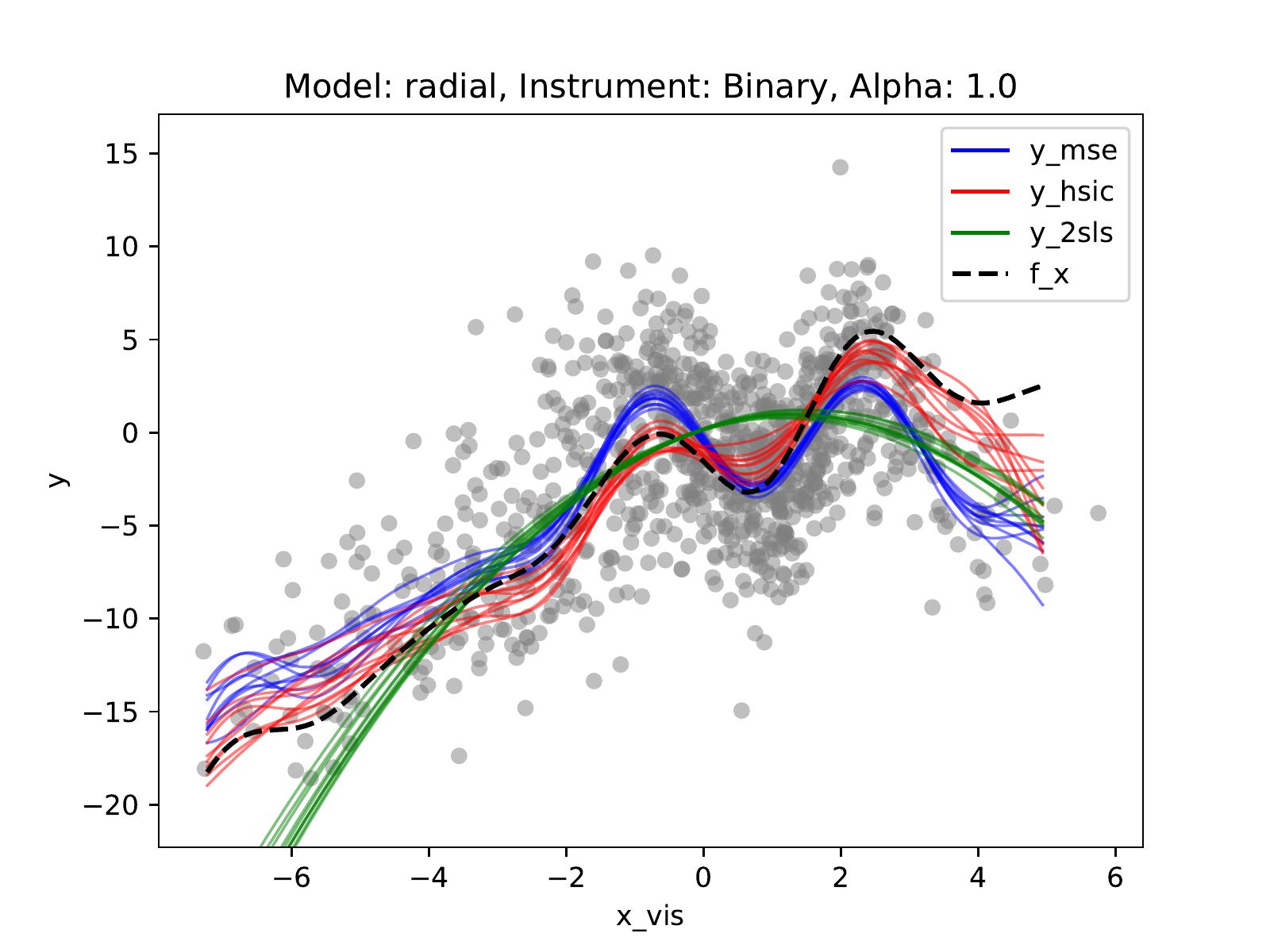}
    \label{fig:lin_binary16}
    }
    \subfigure[$f^0$: Nonlinear, $\P_{\epsilon_Z}$: Gaussian, $\alpha$: 0]{
         \centering
         \includegraphics[width=0.34\textwidth]{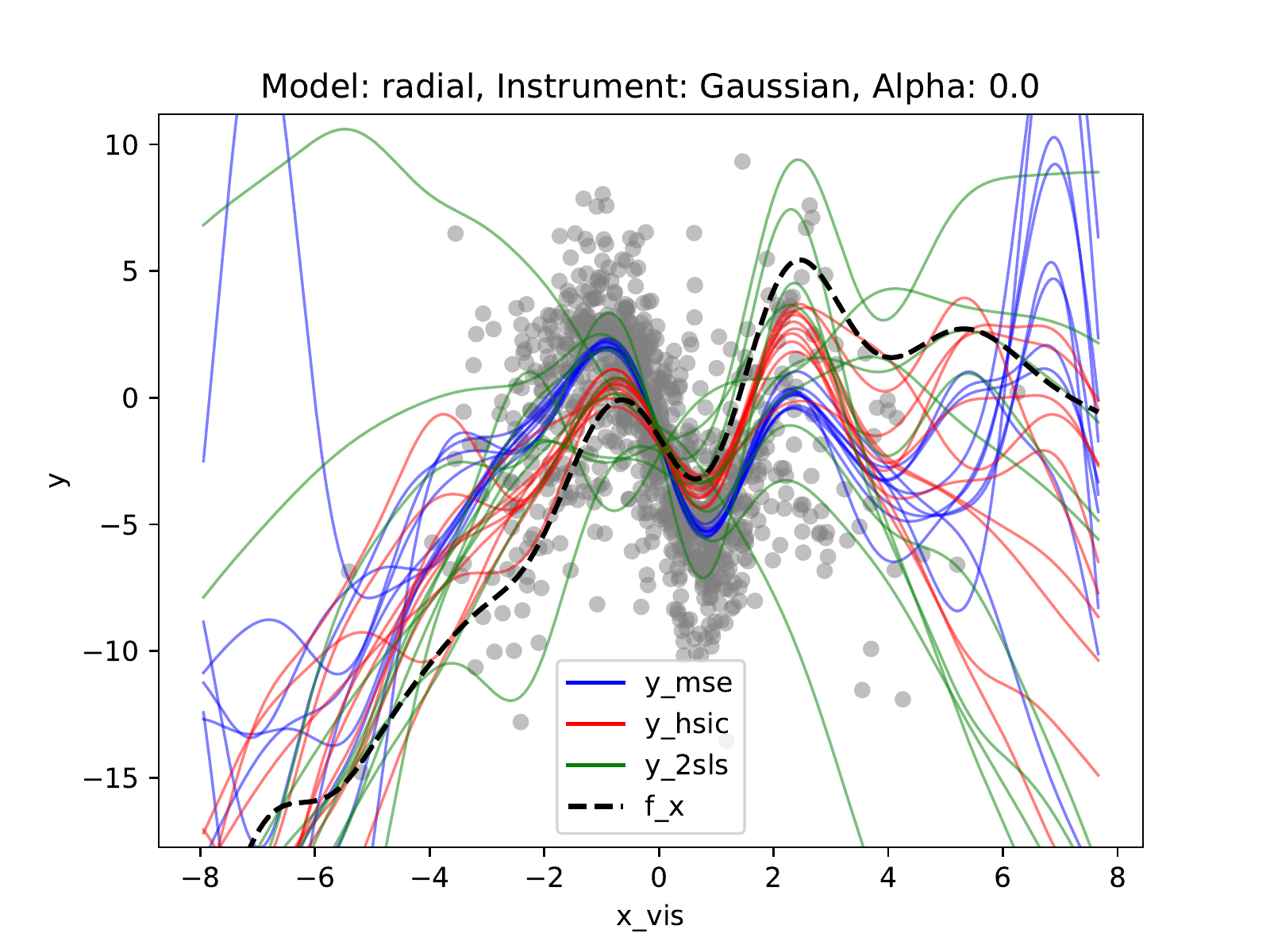}
    \label{fig:lin_binary17}
    }
    \subfigure[$f^0$: Nonlinear, $\P_{\epsilon_Z}$: Gaussian, $\alpha$: 1.0]{
         \centering
         \includegraphics[width=0.34\textwidth]{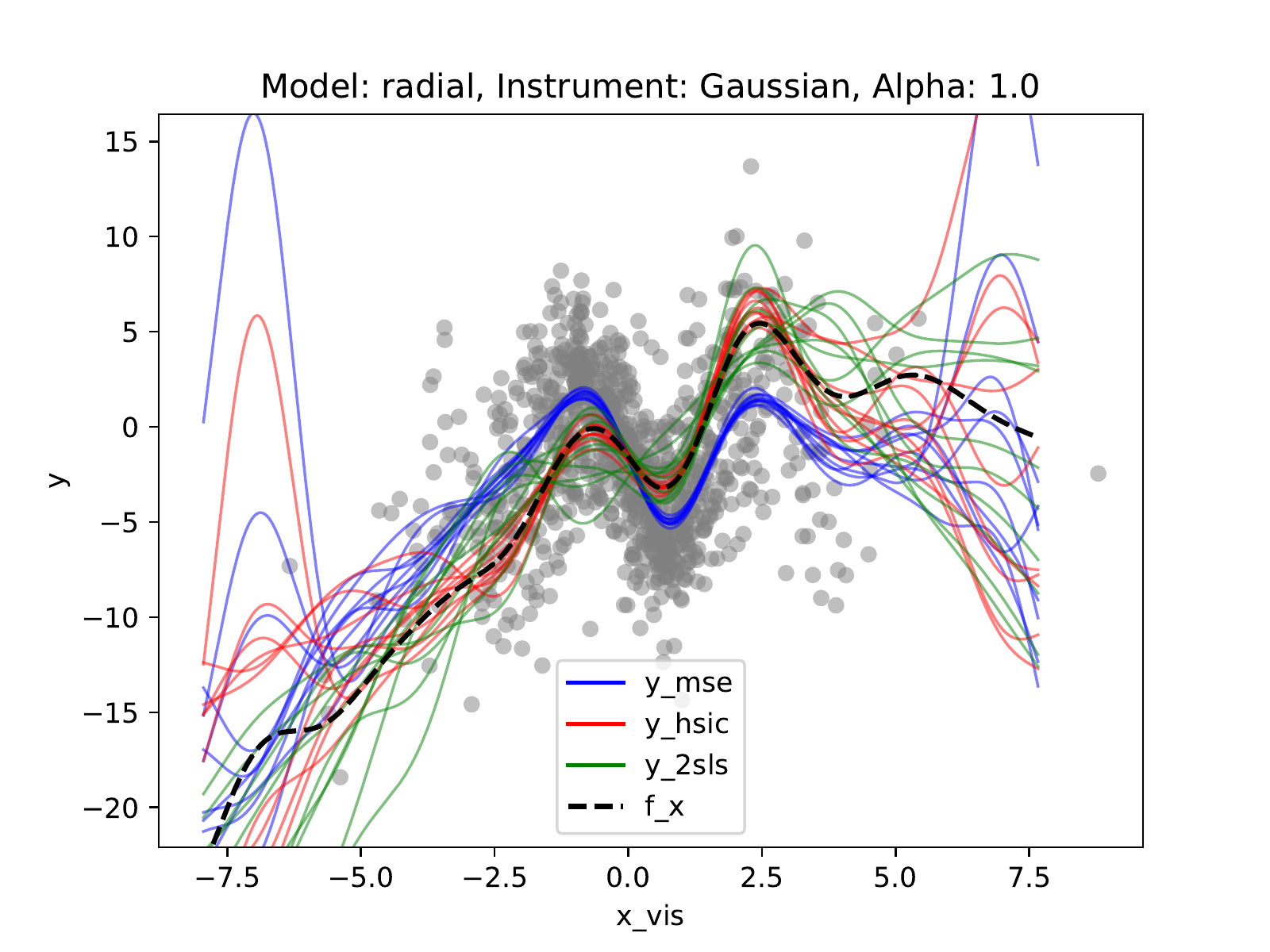}
    \label{fig:lin_binary18}
    }
\caption{Estimated causal functions with different estimators under varying $\alpha$ values from the experiment in Section~\ref{sec:sim_iv_basis} (the known basis functions setting). The legend reads as follows: "y\_mse" represents OLS, "y\_hsic" represents HSIC-X, "y\_2sls" represents 2SLS and "f\_x" represents the underlying causal function. Each line represents an estimated function from one simulation run. The grey dots are the observations drawn from the corresponding IV model.}
\label{fig:estimated_causal_functions}
\end{figure*}

\subsection{Distribution Generalization}\label{sec:app_exp_DG}
In Section~\ref{sec:sim_DG}, we consider both linear and nonlinear underlying causal functions. The linear function is defined as $f^0_{\text{lin}}(X_1, X_2) \coloneqq X_1 + X_2$, and the nonlinear function is defined as $f^0_{\text{nonlin}}(X) \coloneqq \sum_{j=1}^{10} w_j e^{-\frac{1}{3}\norm{X - c_j}_2^2}$, where $c_j$ is drawn from a uniform distribution over a two-dimensional grid $[-5, 5] \times [-5, 5]$  and $w_1,\dots,w_{10} \simiid \mathcal{N}(0,4)$. We employ the correct basis and neural network function classes in our method and the baselines. For the correct basis, we consider $\mathcal{F} \coloneqq \{ f(\cdot) = \phi(\cdot)^\top \theta \mid \theta \in \R^p \}$ with $p = 3$ and $\phi(x) = [1, x_1, x_2]$ in the linear case, and $p = 11$ and $\phi(x) = [1, e^{-\frac{1}{3}\norm{x - c_1}_2^2}, \dots, e^{-\frac{1}{3}\norm{x - c_{10}}_2^2}]$ in the nonlinear case. For the neural network function class, we use a neural network with one hidden layer of size $64$. We optimize our model using Adam optimizer with the learning rate of $0.005$ and batch-size of $256$, and use the R package `AnchorRegression' (\url{https://github.com/simzim96/AnchorRegression}) for the Anchor Regression baseline. Lastly, the tuning parameter $\gamma$ of Anchor Regression is set to $100$, and the tuning parameter $\lambda$ for HSIC-X-pen is chosen as the largest possible value for which an HSIC-based independence test between the estimated residuals and the instruments is not rejected (see Section~\ref{sec:hsic_pen}).

\end{document}